\title{Efficient improper learning for online logistic regression}
\newcommand{\namealgo}{AIOLI }
\author{
  Rémi Jézéquel 
   \And
  Pierre Gaillard
  \And
  Alessandro Rudi \\
}
\date{INRIA - Département d’Informatique de l’École Normale Supérieure \\
PSL Research University \\
Paris, France}
\newcommand{\cX}{\mathcal{X}}
\newcommand{\cB}{\mathcal{B}}
\newcommand{\E}{\mathbb{E}}
\newcommand{\R}{\mathbb R}
\newcommand{\htheta}{\hat \theta}
\newcommand{\ie}{\textit{i.e.,}}
\newcommand{\argmin}[1]{\mathop{\operatorname{argmin}}_{#1}}
\newcommand{\deff}{d_{\textrm{eff}}}
\newcommand{\cY}{\mathcal{Y}}
\renewcommand{\hat}{\widehat}
\renewcommand{\epsilon}{\varepsilon}
\renewcommand{\le}{\leq}
\renewcommand{\ge}{\geq}
\renewcommand{\geq}{\geqslant}
\renewcommand{\leq}{\leqslant}
\newcommand{\ch}{\operatorname{cosh}}
\newtheorem{theorem}{Theorem}
\newtheorem{lemma}[theorem]{Lemma}
\newtheorem{corollary}[theorem]{Corollary}
\renewenvironment{proof}
{\par\noindent{\bfseries\upshape Proof\ }}
{\hfill\ensuremath{\square}}
\newenvironment{sketchproof}{\paragraph{Sketch of proof}}{\hfill$\blacksquare$}
\begin{document}

\maketitle

\begin{abstract}
We consider the setting of online  logistic regression and consider the regret with respect to the $\ell_2$-ball of radius $B$. It is known (see~\cite{hazan2014logistic}) that any proper algorithm which has logarithmic regret in the number of samples (denoted $n$) necessarily suffers an exponential multiplicative constant in $B$. In this work, we design an efficient improper algorithm that avoids this exponential constant while preserving a logarithmic regret. 

Indeed, \cite{foster2018logistic} showed that the lower bound  does not apply to improper algorithms and proposed a strategy based on exponential weights with prohibitive computational complexity. Our new algorithm based on regularized empirical risk minimization with surrogate losses satisfies a regret scaling as $O(B\log(Bn))$ with a per-round time-complexity of order $O(d^2 + \log(n))$.
\end{abstract}

\section{Introduction}

In online learning, a learner sequentially interacts with an environment and tries to learn based on data observed on the fly \cite{cesa2006prediction,hazan2016introduction}. More formally, at each iteration $t \ge 1$, the learner receives some input $x_t$ in some input space $\mathcal{X}$; makes a prediction $\hat y_t$ in a decision domain $\hat \cY$ and the environment reveals the output $y_t \in \mathcal{Y}$. The inputs $x_t$ and the outputs $y_t$ are sequentially chosen by the environment and can be arbitrary. No stochastic assumption (except boundedness) on the data sequence $(x_t,y_t)_{1\leq t\leq n}$ is made.    The accuracy of a prediction  $\hat y_t \in \hat \cY$ at instant $t\geq 1$ for the outcome $y_t \in \cY$ is measured through a loss function $\ell:\hat \cY \times \cY \to \R$. The learner aims at minimizing his cumulative regret
\begin{equation}
    R_n(f) = \sum\limits_{t=1}^n \ell \big(\hat y_t, y_t\big) - \sum\limits_{t=1}^n \ell\big(f(x_t),y_t\big) \,,
    \label{eq:regret_def}
\end{equation}
uniformly over all functions $f$ in a reference class of functions $\mathcal{F}$. All along this paper, we will consider the more specific setting of online logistic regression for binary classification. The latter corresponds to binary outputs $y_t \in \mathcal{Y} = \{-1,1\}$, real decisions $\hat y_t \in \hat \cY = \mathbb{R}$, the logistic loss function $\ell: (\hat y_t, y_t) \mapsto \log (1 + e^{-y_t \hat y_t})$ and the reference class $\mathcal{F} = \{x \mapsto \theta^\top x; \theta \in \mathcal{B}(\mathbb{R}^d,B) \}$ of linear functions in the $\ell_2$-ball of radius $B>0$.

\medskip
Logistic regression, which dates back to \cite{berkson1944application}, has been widely studied in the past decades both in the statistical and online setting. It allows to estimate conditional probabilities and is heavily used in practice for multi-class and binary classification. Since the statistical literature is abundant, we highlight here only the key existing approaches for online logistic regression that are relevant for the present work. Using basic properties of the logistic loss, classical algorithms from Online Convex Optimization can be used to minimize the regret~\eqref{eq:regret_def}. On the one hand,
remarking that the logistic loss is convex and Lipschitz, one may use Online Gradient Descent (OGD) of \cite{zinkevich2003online}, which guarantees a regret of order $O(B\sqrt{n})$. On the other hand, using that the logistic loss is $e^{-B}$-exp concave, one can use Online Newton Step (ONS) from \cite{hazan2007logarithmic} which achieves a logarithmic regret of order $O(de^B \log(n))$.

In view of this results, one could wonder if obtaining a better dependence on the number of samples comes with an exponential deterioration on the multiplicative constant in $B$. \cite{hazan2014logistic} considered this exact question and showed that indeed any proper algorithm in the regime $n = O(e^B)$ has at least a worst-case regret of order $\Omega(B^{2/3} n^{1/3})$ for one dimensional inputs.  Therefore any bound of the form $O(B\log(n))$ is impossible for proper algorithms. We recall that an algorithm is called proper if its prediction function $\hat f_t:\cX \to \hat \cY$ is in the reference class $\mathcal{F}$. In other words, it means that for all $t\geq1$, the prediction is of the form $\hat y_t = \hat f_t(x_t)$ with $f_t \in \mathcal{F}$ independent of $x_t$ (i.e., the prediction function is linear in $x_t$ in our case).

However, it was recently shown that this lower-bound does not apply to improper algorithms \cite{foster2018logistic}. Indeed, based on the simple observation that the logistic loss is 1-mixable (see \cite{vovk1998game} for the definition), they could apply Vovk’s Aggregating Algorithm \cite{vovk1998game} which leverages mixability to achieve a regret of order $O(d\log(Bn))$. In particular, they showed that for online logistic regression improper algorithms can significantly outperform proper algorithms by proving a doubly-exponential improvement on the constant $B$. Yet, the complexity of their algorithm, while being polynomial in $d$ and $n$ is highly prohibitive making the algorithm infeasible in practice. Vovk’s Aggregating Algorithm is indeed based on a continuous version of the exponentially weighted average forecaster. To output a prediction one needs to approximate an integral over the  $d$-dimensional ball which requires the use of MCMC approximations. Using the projected Langevin Monte Carlo sampler from \cite{bubeck2018sampling}, they record a computation time of $O(B^{6} n^{12}(Bn+d)^{12})$.

\medskip
This is the starting point of this work. Can we achieve similar performance in online logistic regression with  practical computational complexity? Recently, some works attacked this question for logistic regression in the batch statistical setting with i.i.d. data only. \cite{marteau2019beyond} considered the classical regularized empirical risk minimizer (ERM). Though the latter is proper, using generalized self-concordance properties they could avoid the exponential constant in $B$ under additional assumptions including a well-specified problem, capacity and source conditions. In parallel and independently of this work, \cite{mourtada2019improper} have also designed a practical improper algorithm in the statistical setting based on ERM with an improper regularization using virtual data. They could provide an upper-bound on the excess risk in expectation of order $O((d + B^2 )/n)$. However, they left open the question of achieving it in an online setting.

\paragraph{Contributions} In this paper, we introduce a new practical improper algorithm, that we call \namealgo (Algorithmic efficient Improper Online LogIstic regression), for online logistic regression. The latter is based on Follow The Regularized Leader (FTRL) \cite{mcmahan2011follow} with surrogate losses. \namealgo takes inspiration from the Azoury-Warmuth-Vovk forecaster (also named non-linear Ridge regression or AWV) from \cite{vovk2001competitive} and \cite{azoury2001relative} which adds a non-proper penalty based on the next input $x_t$ and from Online Newton Step \cite{hazan2007logarithmic} which leverages the exp-concavity of logistic regression to achieve logarithmic regret. The per-round space and time complexity of \namealgo is of order $(O(nd^2 + n\log(n))$ which is close to the one of ONS and greatly improves the ones of  \cite{foster2018logistic}. 

We provide in Theorem~\ref{thm:main_theorem} an upper-bound on the regret of \namealgo of the order $O(dB\log(Bn))$. This makes \namealgo provably better than any proper algorithm in the regime where $n = O(e^B)$. 
To illustrate our results, we provide simulations on synthetic data generated by the \textit{adversarial} distribution of \cite{hazan2014logistic} that show that, contrary to classical FTRL, the regret of \namealgo is indeed logarithmic. We summarize in Table~\ref{tab:rates} the rates and per-round computational complexities of the key-algorithms for logistic regression. 

\medskip
In addition to introducing \namealgo, we make  two technical contributions that we believe to  be of their own interests.
Our first technical contribution is based on the simple observation that 
    the logistic function $x \mapsto \log(1+e^{-x})$ is \textit{only} $e^{-B}$-exp concave on $[-B,B]$ when $x$ is close to $-B$. For the rest of the range (typically $x \in [0,B]$), far better exp-concavity parameters (that we also refer to as \textit{curvature}) may be achieved. Therefore, contrary to ONS which uses the worst-case value for the curvature, we consider quadratic approximations of the logistic loss with data-dependent curvature parameters. These approximations are used as surrogate losses minimized by~\namealgo.

 Our second technical contribution is to use an improper regularization that allows us to not pay the worst \textit{curvature} but only the one for $x$ close to $0$. This regularization is inspired from the non-linear Ridge forecaster of~\cite{azoury2001relative} and \cite{vovk2001competitive}. Typically, when a new input $x_t$ is observed by the learner, the latter can use it to regularize more in the direction of $x_t$. If the learner knew the next output $y_t$ a good regularization would be to add the loss $\ell(f(x_t),y_t)$ when computing FTRL. Yet $y_t$ is unknown and the learner must use a regularization independent of $y_t$. The  non-linear Ridge forecaster consists in replacing $y_t$ by $0$. Instead, \namealgo regularizes by adding both $\ell(f(x_t),1)$ and $\ell(f(x_t),-1)$ to the empirical loss to be minimized. The important phenomena is that the dominant regularization is $\ell(f(x_t),y_t)$ if $y_t \htheta_t^\top x_t \ll 0$, that is when the algorithm makes a large error. It is worth emphasizing that this regularization depends on the next input $x_t$ and thus makes our algorithm improper. We believe this type of regularization to be new for online logistic regression and have significant interest to inspire future work.

\begin{table}
    \centering
    \begin{tabular}{ccccc}
        \toprule
        Algorithm & OGD & ONS & \cite{foster2018logistic} &  \namealgo \\
        \midrule
        Regret  & $B\sqrt{n}$ & $de^B\log(n)$ & $d\log(Bn)$ & $dB\log(Bn)$\\
        Total complexity & $nd$ & $nd^3$ & $B^{6} n^{12}(Bn+d)^{12}$ & $nd^2 + n\log(n)$ \\ 
    \bottomrule
    \end{tabular}
    \caption{Regret bounds and computational complexities (in $O(\cdot)$) of relevant algorithms}
    \label{tab:rates}
\end{table}

\paragraph{Setting and notation} We recall the setting and introduce the main notations that will be used all along the paper. Our framework is formalized as a sequential game between a learner and an environment. At each forecasting instance $t \ge 1$, the learner is given an input $x_t \in \cX \subseteq \mathcal{B}(\R^d,R)$ for some radius $R>0$ and dimension $d\geq 1$;  chooses a vector $\hat \theta_t \in \R^d$ (possibly based on the current input $x_t$ and on the past information $x_1,y_1,\dots,x_{t-1},y_{t-1}$); and makes the prediction $\hat y_t = \hat \theta_t^\top x_t \in \R$. Then, the environment chooses $y_t \in \{-1,1\}$; reveals it to the learner which  incurs the loss $\ell_t(\htheta_t) = \ell(\htheta_t^\top x_t, y_t)$ where for all $\theta \in \R^d$,
\[ 
    \ell\big(\theta^\top x_t, y_t\big) = \log\big(1 + e^{-y_t \theta^\top x_t}\big) \,. 
\]
Moreover, the gradients of the loss functions at the estimator will be denoted as $g_t = \nabla \ell_t(\htheta_t) \in \R^d$. We recall that the goal of the learner is to minimize the cumulative regret 
\[ 
    R_n(\theta) = \sum\limits_{t=1}^n \ell\big(\hat \theta_t^\top x_t, y_t\big) - \sum\limits_{t=1}^n \ell\big(\theta^\top x_t,y_t\big) \,,
\]
uniformly over all $\theta \in \mathcal{B}(\R^d,B)$ and all possible sequences $(x_1,y_1),...,(x_n,y_n) \in \cX \times \cY$.

\section{Main contributions}

This section gathers the main contributions of the present paper. Essentially, we introduce in Section~\ref{sec:algorithm} our new algorithm for online logistic regression. In Section~\ref{sec:theorems}, we prove the corresponding upper-bounds on the regret and we provide an efficient implementation in Section~\ref{sec:implementation}.

\subsection{\namealgo: a new algorithm for online logistic regression} 
\label{sec:algorithm}

We introduce here and briefly describe a new algorithm  \namealgo for online logistic regression. More details on the underlying ideas are provided in Section~\ref{sec:technical_contrib}. \namealgo is based on FTRL which is applied on surrogate quadratic losses and with an additional improper regularization. It requires the knowledge of three hyper-parameters: a regularization parameter $\lambda >0$, the diameter of the input space $R>0$ and the diameter of the reference class $B>0$. At each forecasting instance $t\geq 1$, we first define the following quadratic approximations of the past losses for $1\leq s < t$ that are defined by:  for all $\theta \in \R^d$
\begin{equation}
    \label{eq:surrogate_loss_def}
    \hat \ell_s(\theta) = \ell_s(\htheta_s) + g_s^{\top} (\theta - \htheta_s) + \frac{\eta_s}{2} (\theta - \htheta_s)^\top g_s g_s^{\top} (\theta - \htheta_s) \,, \qquad \text{with} \quad \eta_s =  \frac{e^{y_s \hat y_s}}{1+BR} \,.
\end{equation}
This approximation is discussed more in details in Section~\ref{sec:quadratic_approx}. The main point to be noticed is that the curvature parameters $\eta_s$ are adapted to the predictions of the algorithms $\hat y_s$ in contrast to ONS which uses the worst-case values $e^{-B}$ for all $s\geq 1$. 

Then, \namealgo computes the following estimator
\begin{equation}
\label{eq:def_estimator}
\htheta_t = \argmin{\theta \in \R^d} \left\{ \sum\limits_{s=1}^{t-1} \hat \ell_s(\theta) + \ell(\theta^\top x_t,1) + \ell(\theta^\top x_t,-1) + \lambda \|\theta\|^2 \right\}
\end{equation}
and predicts $\hat y_t = \hat \theta_t^\top x_t$. 

We point out that both regularization terms use the original logistic loss $\ell$ and not its approximation $\hat \ell_t$. Still, $\hat \ell_t(\hat \theta_t)$ equals  $\ell(\hat \theta_t^\top x_t, y_t)$.
Remark that this algorithm is indeed improper since $\hat \theta_t$ depends on the next input $x_t$ which implies a non-linear prediction $\hat y_t = \hat \theta_t^\top x_t$ (see Figure~\ref{fig:my_label}). We propose in Section~\ref{sec:implementation} an efficient scheme to sequentially compute $\hat \theta_t$ with low computational and storage complexities.


\begin{figure}[th]
    \centering
    \includegraphics[width=.6\textwidth]{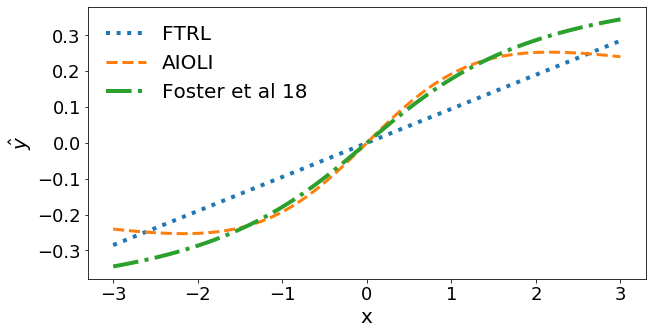}
    \caption{Example of prediction functions obtained by FTRL, \namealgo and the algorithm of \cite{foster2018logistic}.}
    \label{fig:my_label}
\end{figure}

\subsection{Logarithmic upper-bound on the regret without exponential constants}

\label{sec:theorems}

We state now our main theoretical result which is an upper bound on the regret suffered by \namealgo.

\begin{theorem}
\label{thm:main_theorem}
Let $\lambda, R, B>0$ and  $d, n\geq 1$. Let $(x_1,y_1),...,(x_n,y_n) \in \cX \times \cY$ be an arbitrary sequence of observations. \namealgo (as defined in Equation~\eqref{eq:def_estimator}) run with regularization parameter $\lambda > 0$ satisfies the following upper-bound on the regret
\[ 
R_n(\theta) \le \lambda \|\theta\|^2 + d(1 + BR) \log \left( 1 + \frac{n R^2}{8d(1+BR)\lambda} \right) \,,
\]
for all $\theta \in \cB(\R^d,B)$. In particular, by choosing $\lambda = \frac{1}{B^2}$, it yields for all $\theta \in \cB(\R^d,B)$
\begin{equation}
    \label{eq:regret_bound}
R_n(\theta) \le d(1 + BR)  \log \left(1 + \frac{nB^2R^2}{8d(1+BR)}\right) + 1  \,.
\end{equation}
\end{theorem}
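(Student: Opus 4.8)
\emph{Step~1 (reduction to the surrogate losses).} The plan is to first replace the logistic losses $\ell_t$ by the quadratic surrogates $\hat\ell_t$ at no cost, then run FTRL on the surrogates while treating the improper term $r_t(\theta):=\ell(\theta^\top x_t,1)+\ell(\theta^\top x_t,-1)$ as the forward regularizer of the Azoury--Warmuth--Vovk (AWV) forecaster, and finally convert the resulting per-round terms into a log-determinant. For the reduction, note that since $g_s=\nabla\ell_s(\htheta_s)$ is colinear with $x_s$ and $\htheta_s^\top x_s=\hat y_s$, the surrogate $\hat\ell_s(\theta)$, viewed as a function of $z=\theta^\top x_s$, is the parabola matching the value and the slope of $u\mapsto\log(1+e^{-y_s u})$ at $u=\hat y_s$ and, by the choice $\eta_s=e^{y_s\hat y_s}/(1+BR)$, having curvature exactly $1/(1+BR)$ times the curvature of that function at $\hat y_s$. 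Thus $\hat\ell_s(\theta)\le\ell_s(\theta)$ for all $\theta\in\cB(\R^d,B)$ reduces, by the sign symmetry $z\mapsto y_s z$, to the scalar inequality
\[
\phi(z)\ \ge\ \phi(\hat z)+\phi'(\hat z)(z-\hat z)+\frac{\phi''(\hat z)}{2(1+BR)}(z-\hat z)^2,\qquad |z|\le BR,\ \hat z\in\R,
\]
with $\phi(u):=\log(1+e^{-u})$. This is the ``data-dependent curvature'' claim; it follows from the generalized self-concordance $|\phi'''|\le\phi''$ of the logistic loss, the one subtlety being that the associated remainder bound $\phi(z)-\phi(\hat z)-\phi'(\hat z)(z-\hat z)\ge\phi''(\hat z)\big(e^{-(z-\hat z)}+(z-\hat z)-1\big)$ is ``quadratic enough'' only locally, and one must use the exponential decay of $\phi''(\hat z)$ in $|\hat z|$ to cover the remaining range. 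Since moreover $\hat\ell_s(\htheta_s)=\ell_s(\htheta_s)$ with matching gradients, summing over $s$ gives $R_n(\theta)\le\sum_{t=1}^n\big(\hat\ell_t(\htheta_t)-\hat\ell_t(\theta)\big)$, so it remains to bound the regret against the quadratic $\hat\ell_t$.

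\emph{Step~2 (FTRL with a forward regularizer).} With $G_t(\theta):=\lambda\|\theta\|^2+\sum_{s<t}\hat\ell_s(\theta)+r_t(\theta)$ we have $\htheta_t=\argmin{\theta\in\R^d}G_t(\theta)$: this is FTRL on the losses $\hat\ell_s$, with the regularizer moved from $\lambda\|\cdot\|^2+r_s$ to $\lambda\|\cdot\|^2+r_{s+1}$ between rounds $s$ and $s+1$ --- exactly the AWV construction, with the virtual examples $(x_t,+1),(x_t,-1)$ in place of the squared-loss penalty $(\theta^\top x_t)^2$. Applying the be-the-leader telescoping to $G_1,\dots,G_{n+1}$ (with the convention $r_{n+1}:=0$) and using that each $\htheta_t$ minimizes $G_t$, one bounds $\sum_{t=1}^n\big(\hat\ell_t(\htheta_t)-\hat\ell_t(\theta)\big)$ by $\lambda\|\theta\|^2$ plus a sum of per-round stability terms, each comparing the leader $\htheta_t$ --- which carries the virtual regularizer $r_t$ --- with the next leader $\htheta_{t+1}$ --- which carries $\hat\ell_t$ and $r_{t+1}$ instead; the residual $r_t$-differences produced by the telescoping are absorbed with the first-order optimality conditions of $G_t$ and $G_{t+1}$.

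\emph{Step~3 (curvature from the regularizer, then the potential argument).} This is where the second technical contribution enters. Since $r_t(\theta)=2\log 2+2\log\ch(\theta^\top x_t/2)$, its Hessian at the play is $\nabla^2 r_t(\htheta_t)=2\phi''(\hat y_t)\,x_tx_t^\top=2(1+BR)\,\eta_t\,g_tg_t^\top$: the regularizer already supplies, in the direction $x_t$, the curvature $\eta_tg_tg_t^\top$ that the incoming surrogate $\hat\ell_t$ would bring, at the polynomial price $2(1+BR)$ rather than the worst-case exp-concavity constant $1/\eta_t\asymp e^{BR}$. Using this identity together with a standard FTRL stability estimate and $\nabla\hat\ell_t(\htheta_t)=g_t$, the per-round stability terms telescope into $\sum_{t=1}^n\big(\hat\ell_t(\htheta_t)-\hat\ell_t(\htheta_{t+1})\big)\le(1+BR)\log\big(\det A_n/\det(\lambda I)\big)$, where $A_t:=\lambda I+\tfrac12\sum_{s\le t}\eta_sg_sg_s^\top$ and one invokes the elliptical potential inequality $\sum_{t}\tfrac{\eta_t}{2}g_t^\top A_t^{-1}g_t\le\log\big(\det A_n/\det(\lambda I)\big)$. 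Finally, $\eta_s\|g_s\|^2=\tfrac{e^{y_s\hat y_s}}{1+BR}\cdot\tfrac{\|x_s\|^2}{(1+e^{y_s\hat y_s})^2}\le\tfrac{R^2}{4(1+BR)}$ yields $\operatorname{tr}(A_n-\lambda I)\le nR^2/\big(8(1+BR)\big)$, so $\log\big(\det A_n/\det(\lambda I)\big)\le d\log\big(1+\tfrac{nR^2}{8d(1+BR)\lambda}\big)$ by concavity of $\log\det$. Combining the three steps gives the stated bound, and $\lambda=1/B^2$ (so that $\lambda\|\theta\|^2\le1$) gives~\eqref{eq:regret_bound}.

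\emph{Main obstacle.} The scalar inequality of Step~1 is a little delicate --- it holds on $\{|z|\le BR\}\times\R$ only because the decay of $\phi''$ compensates the regimes where $\hat y_s$ is a very confident wrong prediction --- but the real work is in Steps~2--3: running the AWV-style telescoping with a \emph{non-quadratic} regularizer that is inserted and removed at every round, and certifying that the Hessian $\nabla^2 r_t$, which is concentrated near $\hat y_t\approx 0$ and fades for $|\hat y_t|$ large, is exactly where the stability analysis needs it, so that the per-round price is $O(1+BR)$ rather than $e^{BR}$. Morally this works because whenever $\eta_t$ is tiny --- the algorithm makes a large error on $(x_t,y_t)$ --- the term $\ell(\theta^\top x_t,y_t)$ hidden inside $r_t$ is of order $|\hat y_t|$ and dominates the rest of $G_t$, which pins $\htheta_t$ near the next leader $\htheta_{t+1}$.
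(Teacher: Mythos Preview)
Your Step~1 matches the paper's: both reduce $R_n(\theta)$ to the surrogate regret $\hat R_n(\theta)$ via the scalar inequality $\hat\ell_t(\theta)\le\ell_t(\theta)$ for $\theta\in\cB(\R^d,B)$, and both land on the elliptical-potential sum $\sum_t\tfrac{\eta_t}{2}g_t^\top A_t^{-1}g_t$ and the trace bound $\eta_t\|g_t\|^2\le R^2/(4(1+BR))$ to finish. (The paper proves the scalar inequality directly, not via self-concordance, but this is a minor difference.)

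Steps~2--3, however, diverge substantially from the paper, and the argument you give there is incomplete. The paper does \emph{not} use the Hessian identity $\nabla^2 r_t(\htheta_t)=2(1+BR)\eta_tg_tg_t^\top$ (which is correct, and a nice observation), nor any FTRL stability estimate. Instead it introduces the auxiliary \emph{unregularized} leaders $\theta_t=\argmin{}\hat L_{t-1}$ and $\theta_{t+1}=\argmin{}\hat L_t$, writes the per-round term exactly as
\[
(\theta_{t+1}-\htheta_t)^\top A_t(\theta_{t+1}-\htheta_t)-(\theta_t-\htheta_t)^\top A_{t-1}(\theta_t-\htheta_t),
\]
and then uses only two ingredients: (i) the \emph{first-order} optimality of $\htheta_t$, which gives $\nabla\hat L_{t-1}(\htheta_t)+g_t+g_t^{-y_t}=0$ exactly, and (ii) the gradient identity $g_t^{-y_t}=-(1+BR)\eta_t g_t$. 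With (i) one computes $2A_t(\theta_{t+1}-\htheta_t)=g_t^{-y_t}$ and $2A_{t-1}(\theta_t-\htheta_t)=g_t+g_t^{-y_t}$, and after a short expansion and (ii) the per-round term is bounded by $(1+BR)\tfrac{\eta_t}{2}g_t^\top A_t^{-1}g_t$ by pure quadratic algebra. No curvature of $r_t$ enters; the non-quadratic regularizer is used only through its gradient at the play.

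Your route instead asks for a ``standard FTRL stability estimate'' with a \emph{non-quadratic, time-varying} forward regularizer that is inserted and removed each round. This is precisely where you flag the ``main obstacle'', and indeed the proposal does not close it: the Hessian identity you state is local at $\htheta_t$, whereas any stability/Bregman argument must control $r_t$ and $r_{t+1}$ along the segment between $\htheta_t$ and $\htheta_{t+1}$, where $\nabla^2 r_t$ need not stay comparable to $\eta_tg_tg_t^\top$; and the residual $r_t(\htheta_t)-r_{t+1}(\htheta_{t+1})$ terms generated by the be-the-leader telescoping are not shown to cancel. It is possible this can be made to work, but it is not ``standard'', and as written the key inequality $\sum_t\big(\hat\ell_t(\htheta_t)-\hat\ell_t(\htheta_{t+1})\big)\le(1+BR)\log(\det A_n/\det(\lambda I))$ is asserted rather than proved. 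The paper sidesteps all of this by staying entirely in the quadratic world via the auxiliary $\theta_t$'s and the first-order identity, which is the step you are missing.
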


This theorem is a consequence of the more general theorem \ref{thm:main_theorem_approx} which is deferred to Appendix~\ref{app:proof_main_thm}. We only highlight below the key ingredients of the proof. Theorem~\ref{thm:main_theorem} states that the regret of \namealgo is logarithmic in $n$ with a multiplicative constant of order $dB$ which is an exponential improvement in $B$ over the one achieved by proper algorithms such as ONS \cite{hazan2007logarithmic}. Yet, our regret upper-bound is weaker than the one of \cite{foster2018logistic} which is of order $O(d\log(Bn))$. Their algorithm however requires a prohibitive time complexity of order $O(B^6n^{12}(Bn+d)^{12})$ through complex MCMC procedures. We leave for future work the question weather their regret is achievable by our algorithm or not.

\begin{sketchproof}
The proof of the theorem is based on two main steps: 1) we upper-bound the cumulative regret using the true losses by the cumulative regret using the quadratic surrogate losses; 2) we can then follow (with some adjustments) the analysis for online linear regression with squared loss of \cite{azoury2001relative} and \cite{vovk2001competitive} (see also the proof of \cite{gaillard2018uniform}). Fix $\theta \in \cB(\R^d,B)$.

\medskip \noindent
\emph{Step~1.} The first step (i.e., the upper-bound of the regret with the surrogate regret) uses the key Lemma~\ref{lem:quad_approx_lower}, which implies that the quadratic surrogate loss are lower-bounds on the logistic losses. That is,
\[
     \forall t\geq 1, \qquad  \hat \ell_t(\theta) \leq \ell_t(\theta).
\]
Using that by definition (see Equation~\eqref{eq:surrogate_loss_def}) we also have $\hat \ell_t(\hat \theta_t) = \ell_t(\hat \theta_t)$ for all $t \geq 1$, this entails $\ell_t(\hat \theta_t) - \ell_t(\theta) \leq \hat \ell_t(\hat \theta_t) - \hat \ell_t(\theta)$, which implies
\[ 
    R_n(\theta) = \sum\limits_{t=1}^n \ell_t(\htheta_t) - \sum\limits_{t=1}^n \ell_t(\theta) \le \sum\limits_{t=1}^n \hat \ell_t(\htheta_t) - \sum\limits_{t=1}^n \hat \ell_t(\theta) = \hat R_n(\theta)  \,.
\]

\medskip \noindent
\emph{Step 2.} Using that the surrogates losses $\hat \ell_t$ are quadratic, the second part of the proof follows the one of \cite{gaillard2018uniform} for online least square regression. After technical linear algebra computation, this leads to
\[
    \hat R_n(\theta) \leq \sum_{t=1}^n (\theta_{t+1} - \htheta_t)^\top A_t (\theta_{t+1} - \htheta_t) - (\theta_t - \htheta_t)^\top A_{t-1} (\theta_t - \htheta_t) \,,
\]
where $A_t = \lambda I + \sum_{s=1}^t \frac{\eta_s}{2} g_s g_s^\top$ and we recall that $g_s = \nabla \hat \ell_s(\hat \theta_s)$ and $\eta_t = e^{y_t  \hat y_t}/ (1 + BR)$. 
%
Using the definition of $\htheta_t$, after some computations, we can upper-bound
\[
    (\theta_{t+1} - \htheta_t)^\top A_t (\theta_{t+1} - \htheta_t) - (\theta_t - \htheta_t)^\top A_{t-1} (\theta_t - \htheta_t) \leq - \frac{1}{2} g_t^\top A_t^{-1} g_t^{-y_t} .
\]
Note that either $g_t$ or $g_t^{-y_t}$ is small. More precisely, if $\eta_t$ is exponentially small then this is also the case for $g_t^{-y_t}$ which is key to avoid the exponential constant. It should be put in comparison with the bound $g_t^\top  A_t^{-1} g_t$ that one would have obtained with the FTRL algorithm.
More precisely, we have the following relation $g_t^{-y_t} = -(1+BR)\eta_t g_t$ which leads to
\[
    \hat R_n(\theta) \leq (1+BR) \sum_{t=1}^n \frac{\eta_t}{2} g_t^\top A_t^{-1} g_t .
\]
This leaves us with a telescoping sum that finally provides the final regret upper-bound of the theorem. 

\end{sketchproof}

\subsection{Efficient Implementation}
\label{sec:implementation}

In this section, we show how to compute incrementally the proposed forecaster $\htheta_t$, defined in \eqref{eq:def_estimator}. First, we defined the sufficient statistics used by \namealgo as
\begin{equation}\label{eq:def-At-bt}
    A_t = \lambda I + \frac{1}{2}\sum_{s=1}^t \eta_s\,g_s g_s^\top, \quad b_t = \frac{1}{2}\sum_{s=1}^t (\eta_s g_s^\top \htheta_s - 1) g_s. 
\end{equation}
In the next lemma we characterize also $\htheta_t$ in terms $A_{t-1}, b_{t-1}, x_t$. 
\begin{lemma}[Characterizing $\htheta_t$ given $A_{t-1}, b_{t-1}, \theta_t, x_t$]\label{thm:dec-htheta-alg}\label{lm:dec-htheta-alg}
Using the notation above define
$$
W_t = L_{t-1}^{-1} (b_{t-1}, x_t) \in \R^{d\times 2}\,,
$$ 
where $L_{t-1}$ is the Cholesky decomposition of $A_{t-1}$, i.e. the lower triangular matrix satisfying $A_{t-1} = L_{t-1} L_{t-1}^\top$ and $\omega_t \in \R^2$ is the solution of the following problem
\begin{equation}\label{eq:small-problem}
    \omega_t = \argmin{\omega \in \R^{p_t}} \Omega_t(\omega), \quad  \Omega_t(\omega) = \|\omega\|^2 - 2u_t^\top \omega + \log(1+e^{-v_t^\top\omega}) + \log(1+e^{v_t^\top\omega}),
\end{equation}
    where $p_t \in \{1,2\}$ is the rank of the matrix $W_t$, $u_t = \Sigma_t^{1/2} U^\top e_1, v_t = \Sigma_t^{1/2} U^\top e_2$ with $\{U_t, \Sigma_t\}$ corresponding to the economic eigenvalue decomposition\footnote{I.e., $W_t^\top W_t = U_t \Sigma_t U_t^\top$ with $U_t \in \R^{2\times p_t}$ with $p_t$ the rank of $W_t^\top W_t$, such that $U_t^\top U_t = I$ and $\Sigma_t \in \R^{p_t \times p_t}$ is diagonal and positive.} of $W_t^\top W_t$ and $e_1= (1, 0), e_2 = (0, 1)$. Then 
\begin{equation}
    \htheta_t = L_{t-1}^{-\top} W_t U_t \Sigma_t^{-1/2}\omega_t.
\end{equation}
\end{lemma}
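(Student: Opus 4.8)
The plan is to start from the optimization problem defining $\htheta_t$ in Equation~\eqref{eq:def_estimator} and reduce it, via the change of variables induced by the Cholesky factor $L_{t-1}$, to a low-dimensional problem that can be solved explicitly up to the $2$-dimensional convex program~\eqref{eq:small-problem}. First I would expand the objective in~\eqref{eq:def_estimator}: since the surrogate losses $\hat\ell_s$ are quadratic, $\sum_{s=1}^{t-1}\hat\ell_s(\theta) + \lambda\|\theta\|^2$ is a quadratic form in $\theta$, and using the definitions~\eqref{eq:def-At-bt} of $A_{t-1}$ and $b_{t-1}$ one checks that this quadratic part equals $\theta^\top A_{t-1}\theta - 2 b_{t-1}^\top\theta$ plus a constant independent of $\theta$ (the gradient identity $g_s = \nabla\hat\ell_s(\htheta_s)$ and the explicit form of $\hat\ell_s$ make the cross terms collapse exactly into $b_{t-1}$). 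The two regularization terms $\ell(\theta^\top x_t,1) + \ell(\theta^\top x_t,-1) = \log(1+e^{-\theta^\top x_t}) + \log(1+e^{\theta^\top x_t})$ depend on $\theta$ only through the scalar $\theta^\top x_t$.

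Next I would substitute $\theta = L_{t-1}^{-\top}\phi$ so that $\theta^\top A_{t-1}\theta = \|\phi\|^2$ and $b_{t-1}^\top\theta = (L_{t-1}^{-1}b_{t-1})^\top\phi$, $x_t^\top\theta = (L_{t-1}^{-1}x_t)^\top\phi$; writing $W_t = L_{t-1}^{-1}(b_{t-1},x_t)$ the objective becomes $\|\phi\|^2 - 2(W_t e_1)^\top\phi + \log(1+e^{-(W_t e_2)^\top\phi}) + \log(1+e^{(W_t e_2)^\top\phi})$. This objective depends on $\phi$ only through its projection onto the column space of $W_t$; the component of $\phi$ orthogonal to $\mathrm{range}(W_t)$ only appears in $\|\phi\|^2$ and is thus driven to zero at the optimum, so $\phi_\star = W_t\psi$ for some $\psi$ living in a $p_t$-dimensional space. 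Then I would introduce the economic eigendecomposition $W_t^\top W_t = U_t\Sigma_t U_t^\top$ and reparametrize $\psi$ so that $\|W_t\psi\|^2$ becomes $\|\omega\|^2$ — concretely $\phi = W_t U_t\Sigma_t^{-1/2}\omega$ makes the quadratic term $\omega^\top\omega$, while $(W_t e_i)^\top\phi = e_i^\top W_t^\top W_t U_t \Sigma_t^{-1/2}\omega = e_i^\top U_t\Sigma_t^{1/2}\omega = (\Sigma_t^{1/2}U_t^\top e_i)^\top\omega = u_t^\top\omega$ or $v_t^\top\omega$ as defined. This recovers exactly the reduced problem~\eqref{eq:small-problem}, and unwinding the substitutions gives $\htheta_t = L_{t-1}^{-\top}\phi_\star = L_{t-1}^{-\top}W_t U_t\Sigma_t^{-1/2}\omega_t$.

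The main obstacle — and the one step I would be most careful about — is the first one: verifying that the quadratic part of~\eqref{eq:def_estimator} really collapses to exactly $\theta^\top A_{t-1}\theta - 2b_{t-1}^\top\theta + \mathrm{const}$ with the $b_{t-1}$ given in~\eqref{eq:def-At-bt}. This requires expanding $\hat\ell_s(\theta) = \ell_s(\htheta_s) + g_s^\top(\theta - \htheta_s) + \tfrac{\eta_s}{2}(\theta-\htheta_s)^\top g_s g_s^\top(\theta-\htheta_s)$, collecting the linear-in-$\theta$ terms across all $s < t$, namely $g_s^\top\theta - \eta_s (g_s^\top\htheta_s)(g_s^\top\theta) = -(\eta_s g_s^\top\htheta_s - 1)g_s^\top\theta$, and matching against $-2b_{t-1}^\top\theta$; the factor $\tfrac12$ and the sign conventions in~\eqref{eq:def-At-bt} have to line up precisely. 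A secondary technical point is handling the rank-deficient case $p_t < 2$ cleanly: one must argue that restricting $\phi$ to $\mathrm{range}(W_t)$ loses no optimality (strict convexity of $\|\phi\|^2$ in the orthogonal complement) and that $\Sigma_t^{-1/2}$ is well-defined on the $p_t$-dimensional eigenspace, which is exactly why the ``economic'' decomposition is used. Existence and uniqueness of $\omega_t$ follow from strong convexity of $\Omega_t$ (the $\|\omega\|^2$ term) together with convexity of the logistic terms, so that part is routine.
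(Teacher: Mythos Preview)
Your proposal is correct and follows essentially the same route as the paper: rewrite the objective via \eqref{eq:L_hat_quadratic} as $\theta^\top A_{t-1}\theta - 2b_{t-1}^\top\theta + \log(1+e^{-\theta^\top x_t}) + \log(1+e^{\theta^\top x_t})$, substitute $r = L_{t-1}^\top\theta$ (your $\phi$), project onto $\mathrm{range}(W_t)$, and then use the economic eigendecomposition of $W_t^\top W_t$ to reduce to the $p_t$-dimensional problem in $\omega$. The only cosmetic difference is that the paper inserts an intermediate variable $q\in\R^2$ and restricts it to $\operatorname{span}(W_t^\top W_t)$ before passing to $\omega = \Sigma_t^{1/2}U_t^\top q$, whereas you go directly to $\phi = W_t U_t\Sigma_t^{-1/2}\omega$; both handle the rank-deficient case in the same way.
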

Computing $\htheta_t$ given $A_{t-1}, b_{t-1}, x_t$ therefore boils down to solving the two dimensional optimization problem in \eqref{eq:small-problem}, for which we can use gradient descent, since $\Omega_t$ is smooth strongly convex with a small condition number depending only on $R^2/\lambda$, as proven in the next lemma.

\begin{algorithm}[t]
\caption{\namealgo descriptive version (see Algorithm~\ref{algorithm-detail}, Appendix~\ref{app:implementation} for a detailed version)}
\label{algorithm-short}
Parameters $\lambda, T, n$, constants $B, R$\\
    initialize $L_0 = \lambda^{1/2} I, b_0 = 0, \theta_0 = 0$\\
    \For{$t = 1,...,n$}{
    receive $x_t$ \\
    compute $W_t, U_t, \Sigma_t, u_t, v_t$ using $L_{t-1}, b_{t-1}, x_{t}, \theta_t$, as specified in Lemma~\ref{lm:dec-htheta-alg}\\
    compute $\omega^T_t$ using $T$ steps of gradient descent using $u_t, v_t$ as specified in Lemma~\ref{lm:solve-small-prob}\\
    compute $\tilde\theta_t = L_t^{-\top} W_t U_t \Sigma_t^{-1/2}\omega^T_t$ and predicts $\hat y_t = \tilde \theta_t^{\top} x_t$ \\
    receive $y_t$ \\
    compute $g_t$ using $\tilde\theta_t, y_t, x_t$\\
    compute $L_t$ via rank 1 Cholesky update of $L_{t-1}$ with vector $\sqrt{\eta_t/2} g_t$\\
    compute $b_t = b_{t-1} + (\eta_t g_{t}^\top \tilde\theta_{t} - 1)g_{t}$
    }
\end{algorithm}

\begin{lemma}\label{lm:solve-small-prob}
Let $\epsilon, \gamma > 0, T \in \mathbb{N}$, let $\omega_t$ be the solution of \eqref{eq:small-problem} and let $\omega^T_t$ be defined recursively as
$$\omega^i_t = \omega^{i-1}_t - \gamma \nabla \Omega_t(\omega^{i-1}_t), \quad \forall i \in \{1,\dots, T\}.$$
Then $\|\omega^T_t - \omega_t\| \leq \epsilon$, when $\omega^0_t = 0$ and $\gamma, T$ are chosen as follows
$$\gamma = \frac{\lambda}{4\lambda+R^2}, \quad T \geq \left(4+\frac{R^2}{\lambda}\right)\log\frac{Rt}{\epsilon\sqrt{\lambda}}.$$
\end{lemma}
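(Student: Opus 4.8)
The plan is to reduce the convergence claim to the standard linear-rate guarantee for gradient descent on a smooth, strongly convex function, so I first need to pin down the smoothness constant $\beta$ and the strong-convexity constant $\mu$ of $\Omega_t$ on $\mathbb{R}^{p_t}$. The quadratic part $\|\omega\|^2 - 2u_t^\top\omega$ contributes $2I$ to the Hessian, so $\mu \ge 2$ unconditionally. For the upper bound, the two logistic terms $\log(1+e^{-v_t^\top\omega}) + \log(1+e^{v_t^\top\omega})$ have Hessian $\sigma'(v_t^\top\omega)\,v_tv_t^\top + \sigma'(-v_t^\top\omega)\,v_tv_t^\top$ where $\sigma'$ is the derivative of the sigmoid, which is bounded by $1/4$; hence this part contributes at most $\tfrac12\|v_t\|^2 I$ to the Hessian, and $\beta \le 2 + \tfrac12\|v_t\|^2$. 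The key remaining quantitative input is a bound $\|v_t\|^2 \le R^2/\lambda$: this follows because $v_t = \Sigma_t^{1/2}U_t^\top e_2$ so $\|v_t\|^2 = e_2^\top U_t\Sigma_t U_t^\top e_2 = (W_t^\top W_t)_{22} = \|W_t e_2\|^2 = \|L_{t-1}^{-1}x_t\|^2 = x_t^\top A_{t-1}^{-1} x_t \le \|x_t\|^2/\lambda \le R^2/\lambda$, using $A_{t-1} \succeq \lambda I$ and $\|x_t\| \le R$. Thus $\beta \le 2 + R^2/(2\lambda)$ and $\mu = 2$.

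Next I apply the textbook contraction bound for gradient descent with step size $\gamma = 1/\beta$: $\|\omega^i_t - \omega_t\|^2 \le (1 - \mu/\beta)^i \|\omega^0_t - \omega_t\|^2$, i.e. a per-step contraction factor $1 - \mu/\beta = 1 - 2/(2 + R^2/(2\lambda)) = 1 - \tfrac{4\lambda}{4\lambda + R^2}$. Note $\gamma = 1/\beta = \tfrac{1}{2 + R^2/(2\lambda)} = \tfrac{2\lambda}{4\lambda + R^2}$; the lemma states $\gamma = \tfrac{\lambda}{4\lambda + R^2}$, a factor of $2$ smaller, which is still a valid (slightly conservative) choice — with $\gamma \le 1/\beta$ one still gets contraction factor $1 - \gamma\mu = 1 - \tfrac{2\lambda}{4\lambda+R^2}$. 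Using $1 - u \le e^{-u}$ this gives $\|\omega^T_t - \omega_t\| \le e^{-T\lambda/(4\lambda+R^2)}\|\omega_t\|$ after $T$ steps starting from $\omega^0_t = 0$ (so $\|\omega^0_t - \omega_t\| = \|\omega_t\|$).

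Finally I must control the initial error $\|\omega_t\|$. From the first-order optimality condition $2\omega_t - 2u_t + (\text{gradient of the two logistic terms}) = 0$ and the fact that the logistic gradient $\sigma(v_t^\top\omega_t) - \sigma(-v_t^\top\omega_t)$ times $v_t$ plus... — more cleanly, evaluate $\Omega_t$ at $0$: $\Omega_t(0) = 2\log 2$, and by $\mu$-strong convexity $\tfrac{\mu}{2}\|\omega_t\|^2 \le \Omega_t(0) - \Omega_t(\omega_t) \le \Omega_t(0) - \Omega_t(0) + \langle \nabla\Omega_t(0), 0 - \omega_t\rangle$... the direct route is $\|\omega_t - 0\| \le \tfrac{1}{\mu}\|\nabla\Omega_t(0)\|$ since $\nabla\Omega_t(\omega_t)=0$ and $\nabla\Omega_t$ is $\mu$-strongly monotone; and $\nabla\Omega_t(0) = -2u_t$ (the two logistic gradients cancel at $0$), so $\|\omega_t\| \le \|u_t\| \le \|\Sigma_t^{1/2}U_t^\top e_1\| = \sqrt{(W_t^\top W_t)_{11}} = \|L_{t-1}^{-1}b_{t-1}\| \le \|b_{t-1}\|/\sqrt\lambda$. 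It then remains to bound $\|b_{t-1}\|$: from \eqref{eq:def-At-bt}, $\|b_{t-1}\| \le \tfrac12\sum_{s<t}(|\eta_s g_s^\top\htheta_s| + 1)\|g_s\|$, and each $\|g_s\| \le R$ (logistic gradient $= -y_s\sigma(-y_s\hat y_s)x_s$), $|\eta_s g_s^\top\htheta_s| \le \eta_s \|g_s\|\|\htheta_s\|$ — here I would invoke the boundedness of the iterates already established in the regret analysis (e.g. $\|\htheta_s\| = O(B)$, or crudely that $\eta_s\|g_s\|\|\htheta_s\| = O(1)$ via $\eta_s = e^{y_s\hat y_s}/(1+BR)$ and $\|g_s\| = \sigma(-y_s\hat y_s)\|x_s\| \le e^{y_s\hat y_s}R$...), concluding $\|b_{t-1}\| \le c\, t\, R$ for a universal constant $c$, so $\|\omega_t\| \le cRt/\sqrt\lambda$. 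Plugging into the contraction bound, $T \ge (4 + R^2/\lambda)\log(Rt/(\epsilon\sqrt\lambda))$ forces $e^{-T\lambda/(4\lambda+R^2)}\|\omega_t\| \le \epsilon$ (after absorbing the constant $c$ into the definition of $T$, which is the only place a multiplicative constant creep matters).

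\textbf{Main obstacle.} The quantitative heart is the pair of norm identities $\|v_t\|^2 = x_t^\top A_{t-1}^{-1}x_t$ and $\|u_t\| = \|L_{t-1}^{-1}b_{t-1}\|$, which follow from carefully unwinding the economic eigendecomposition of $W_t^\top W_t$ and the definition $W_t = L_{t-1}^{-1}(b_{t-1}, x_t)$; once these are in hand, the smoothness/strong-convexity constants and the gradient-descent contraction are routine. The secondary technical point is bounding $\|b_{t-1}\|$ polynomially in $t$, which requires the a priori bound $\|\htheta_s\| = O(B)$ on the iterates — this should already be available from (or easily extractable from) the regret analysis in Appendix~\ref{app:proof_main_thm}, but one has to be slightly careful because $\htheta_s$ here refers to the exactly-minimized iterate while the algorithm actually uses $\tilde\theta_s$; a clean statement would fold the optimization error back in inductively, or simply bound everything in terms of the true minimizer and note the $\epsilon$-accuracy is dominated.
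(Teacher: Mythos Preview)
Your overall strategy---reduce to the standard linear rate for gradient descent on a smooth, strongly convex objective and then extract the smoothness and strong-convexity constants from the structure of $u_t, v_t$---is exactly the paper's approach. Your identification of $\|v_t\|^2 = x_t^\top A_{t-1}^{-1}x_t \le R^2/\lambda$ and $\|u_t\|^2 = b_{t-1}^\top A_{t-1}^{-1}b_{t-1} \le \|b_{t-1}\|^2/\lambda$ by unwinding the eigendecomposition of $W_t^\top W_t$ is correct and matches the paper. Your bound $\|\omega_t\| \le \|u_t\|$ via strong monotonicity and the observation $\nabla\Omega_t(0) = -2u_t$ is in fact slightly sharper than the paper's $\|\omega_t\| \le \|u_t\| + \|v_t\|/2$ (obtained directly from the first-order condition), though both suffice.

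The one genuine gap is precisely the step you flag as the main obstacle: the bound on $\|b_{t-1}\|$. Your proposal to invoke an a priori estimate $\|\htheta_s\| = O(B)$ ``from the regret analysis'' does not work---no such bound is established anywhere in the paper (the regret theorem controls only the cumulative loss difference, not the iterates, and $\htheta_s$ is the unconstrained minimizer over $\R^d$). Your alternate ``crude'' route also fails as written: bounding $|\eta_s g_s^\top\htheta_s|$ and $\|g_s\|$ separately and then multiplying loses the cancellation that makes the bound work. The paper's resolution is to compute the full vector $(\eta_s g_s^\top\htheta_s - 1)g_s$ directly. Writing $z = y_s\htheta_s^\top x_s$ and using $e^z/(1+e^z)^2 = 1/(2+2\cosh z)$, one gets
\[
(\eta_s g_s^\top\htheta_s - 1)\,g_s \;=\; \frac{z}{(1+BR)(2 + 2\cosh z)}\, y_s x_s \;+\; \frac{1}{1+e^{z}}\, y_s x_s\,.
\]
Since $\sup_{z\in\R}|z|/(2+2\cosh z) \le 1$ and $1/(1+e^z) \le 1$, each summand has norm at most $R$, so each term in the sum defining $b_{t-1}$ is at most $2R$; with the factor $\tfrac12$ in~\eqref{eq:def-At-bt} this gives $\|b_{t-1}\| \le (t-1)R$ with no appeal to $\|\htheta_s\|$ whatsoever. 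Once you plug in this computation, your argument is complete and essentially coincides with the paper's.
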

The efficient sequential implementation of $\hat \theta_t$ reported in~\eqref{algorithm-short} is obtained by combining the different steps given by: the characterization $\hat \theta_t$ (Lemma~\ref{lm:dec-htheta-alg}); the efficient solution of~\eqref{eq:small-problem} (Lemma~\ref{lm:solve-small-prob}); and the fact that $L_t$ can efficiently be updated online by doing a Cholesky rank 1 update. 
More details on the algorithm are provided in Algorithm~\ref{algorithm-detail} in Appendix~\ref{app:implementation}. The total computational cost is of order $O(nd^2 + n \log n)$ as proven in the next theorem. The proof relies on the facts that rank 1 Cholesky updates cost $O(d^2)$ and that the cost of $w = L^{-1} v$ with $L \in \R^{d \times d}$ triangular invertible and $v,w\in \R^d$ (i.e. the solution of a triangular linear system $L w = v$)  is $O(d^2)$ \cite{golub2012matrix}.
\begin{theorem}[Efficient implementation]
\label{thm:comp-compl}
Let $T, n \in \mathbb{N}$ and $\tilde{\theta}_t$ be the solution of Algorithm~\ref{algorithm-detail} at step $t$, with hyperparameter $T$. Choosing $T = \left\lceil (4 + \frac{R^2}{\lambda}) \log \left( \frac{3n^2 R^2}{\lambda}(\frac{nR^2}{8\lambda} + B)\right)\right\rceil$ leads to a regret $\tilde{R}_n(\theta)$ for the forecaster $(\tilde{\theta}_t)_{t=1}^n$ bounded by
$$\tilde{R}_n(\theta) \le \lambda \|\theta\|^2 + d(1 + BR) \log \left(1 + \frac{nR^2}{8d(1+BR)\lambda} \right) + 1.$$
Moreover, Algorithm~\ref{algorithm-detail} has a total computational complexity
$$O\left(nd^2 + n \frac{R^2}{\lambda}\log\left[\frac{R n}{\lambda} + B\right]\right).$$
\end{theorem}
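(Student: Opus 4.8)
The plan is to prove the two assertions separately: first the regret bound for the approximate forecaster $(\tilde\theta_t)_{t=1}^n$, then the per-round and total computational cost of Algorithm~\ref{algorithm-detail}.

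\textbf{Regret.} The exact forecaster $(\htheta_t)$ of Equation~\eqref{eq:def_estimator} already satisfies the bound of Theorem~\ref{thm:main_theorem} with general $\lambda$, so it is enough to show that replacing $\htheta_t$ by the approximate iterate $\tilde\theta_t$ inflates the regret by at most $1$. I would obtain this from the general statement Theorem~\ref{thm:main_theorem_approx}, whose guarantee degrades by a controlled additive term when the forecaster only approximately minimizes the FTRL objective; the single quantity fed into it is $\max_t\|\tilde\theta_t-\htheta_t\|$. To control this, use Lemma~\ref{lm:dec-htheta-alg}: both $\htheta_t$ and $\tilde\theta_t$ are the image of $\omega_t$, resp.\ $\omega_t^T$, under the \emph{same} linear map $M_t := L_{t-1}^{-\top}W_t U_t\Sigma_t^{-1/2}$, so $\|\tilde\theta_t-\htheta_t\|\le\|M_t\|_{\mathrm{op}}\,\|\omega_t^T-\omega_t\|$. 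Now $(W_t U_t\Sigma_t^{-1/2})^\top(W_t U_t\Sigma_t^{-1/2})=\Sigma_t^{-1/2}U_t^\top(W_t^\top W_t)U_t\Sigma_t^{-1/2}=I_{p_t}$, so $W_t U_t\Sigma_t^{-1/2}$ is a partial isometry, while $\|L_{t-1}^{-\top}\|_{\mathrm{op}}=\lambda_{\min}(A_{t-1})^{-1/2}\le\lambda^{-1/2}$ because $A_{t-1}\succeq\lambda I$; hence $\|M_t\|_{\mathrm{op}}\le\lambda^{-1/2}$ and, by Lemma~\ref{lm:solve-small-prob}, $\|\tilde\theta_t-\htheta_t\|\le\epsilon/\sqrt\lambda$.

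It remains to choose $\epsilon$. The extra term produced by Theorem~\ref{thm:main_theorem_approx} is, up to higher-order contributions, at most $\sum_{t=1}^n c\,R\,(\|\htheta_t\|+\|\theta\|)\,\|\tilde\theta_t-\htheta_t\|$; one checks that $\|\htheta_t\|$ and $\|\theta\|$ are both of order at most $\tfrac{nR^2}{8\lambda}+B$ (using the $\lambda\|\theta\|^2$ regularization together with the lower-bound property $\hat\ell_s\le\ell_s$ of Lemma~\ref{lem:quad_approx_lower} to bound the value of the FTRL objective at $\htheta_t$, and $\|\theta\|\le B$). Taking $\epsilon=\sqrt\lambda\big/\big(3nR(\tfrac{nR^2}{8\lambda}+B)\big)$ then makes this extra term at most $1$, and substituting this $\epsilon$ together with $t\le n$ into the prescription of Lemma~\ref{lm:solve-small-prob} yields exactly the value of $T$ in the statement, giving $\tilde R_n(\theta)\le\lambda\|\theta\|^2+d(1+BR)\log\!\big(1+\tfrac{nR^2}{8d(1+BR)\lambda}\big)+1$.

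\textbf{Complexity.} I would tally the cost of one iteration of Algorithm~\ref{algorithm-detail}: forming $W_t=L_{t-1}^{-1}(b_{t-1},x_t)$ is two triangular solves, $O(d^2)$; forming $W_t^\top W_t\in\R^{2\times2}$ costs $O(d)$ and its (economic) eigendecomposition and the vectors $u_t,v_t$ cost $O(1)$; the $T$ gradient steps on the two-dimensional problem~\eqref{eq:small-problem} cost $O(T)$; forming $\tilde\theta_t=L_{t-1}^{-\top}W_t U_t\Sigma_t^{-1/2}\omega_t^T$ is one more triangular solve, $O(d^2)$; the prediction $\tilde\theta_t^\top x_t$ and the gradient $g_t$ cost $O(d)$; the rank-$1$ Cholesky update of $L_{t-1}$ into $L_t$ with vector $\sqrt{\eta_t/2}\,g_t$ costs $O(d^2)$; and the update of $b_t$ costs $O(d)$. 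A round is thus $O(d^2+T)$, so $n$ rounds cost $O(nd^2+nT)$; inserting the chosen $T=O\!\big((4+R^2/\lambda)\log(\tfrac{nR}{\lambda}+B)\big)$ gives the announced total $O\!\big(nd^2+n\tfrac{R^2}{\lambda}\log[\tfrac{Rn}{\lambda}+B]\big)$.

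\textbf{Main obstacle.} The delicate part is the regret half, specifically going through Theorem~\ref{thm:main_theorem_approx}: since $g_t$ is evaluated at $\tilde\theta_t$, the surrogates $\hat\ell_s$ and the matrices $A_t$ are all built from the approximate iterates, so the entire analysis must be run in its ``approximately optimal leader'' version. The two structural facts of the exact proof survive — $\hat\ell_t(\tilde\theta_t)=\ell_t(\tilde\theta_t)$ by construction of the surrogate, and $\hat\ell_t\le\ell_t$ by Lemma~\ref{lem:quad_approx_lower} — so Step~1 of the sketch is untouched; the error enters only through the approximate rather than exact first-order optimality of $\tilde\theta_t$ in Step~2, and the real work is to extract a clean enough bound on $\|\htheta_t\|$ (the $\tfrac{nR^2}{8\lambda}+B$ factor) so that the accumulated perturbation collapses to a single additive constant.
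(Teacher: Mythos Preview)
Your proposal is correct and follows the paper's proof almost exactly: invoke Theorem~\ref{thm:main_theorem_approx}, bound $\|\tilde\theta_t-\htheta_t\|$ via the partial-isometry identity $(W_tU_t\Sigma_t^{-1/2})^\top(W_tU_t\Sigma_t^{-1/2})=I$ and $\|L_{t-1}^{-\top}\|\le\lambda^{-1/2}$ together with Lemma~\ref{lm:solve-small-prob}, pick $\epsilon=\sqrt\lambda/\big(3nR(\tfrac{nR^2}{8\lambda}+B)\big)$ so the additive term becomes $1$, and tally $O(d^2+T)$ per round. One small correction: your heuristic for the origin of the factor $\tfrac{nR^2}{8\lambda}+B$ in the perturbation term (via a bound on $\|\htheta_t\|$) is not how it actually arises in the proof of Theorem~\ref{thm:main_theorem_approx}---there it comes from the Lipschitz constant of $\nabla\big(\hat L_{t-1}+\ell_t^{1}+\ell_t^{-1}\big)$, which is of order $(t+1)R^2/4+2\lambda$---but since you invoke that theorem as a black box this misattribution does not create a gap.
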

To conclude, note that, when $\lambda = \frac{1}{B^2}$, the total computational complexity of Algorithm~\ref{algorithm-detail} is $O(nd^2 + n \log n)$.

\section{Key ideas of the analysis}
\label{sec:technical_contrib}

In this section, we present more in details the two main ideas of our analysis. We believe that they might be of independent technical interest for future work. 

\subsection{Quadratic approximations with adaptive curvature}
\label{sec:quadratic_approx}

The main historical approach to prove logarithmic regret for online logistic regression is based on the observation that the logistic losses $\ell_t:\theta \mapsto \ell(\theta^\top x_t,y_t)$ are $\alpha$-exp-concave for some fixed exp-concavity parameter $\alpha >0$. In other words, for all $t\geq 1$, the functions $\theta \mapsto \exp\big(-\alpha \ell_t(\theta)\big)$ are convex.  From \cite[Lemma 4.2]{hazan2016introduction}, $\alpha$-exp-concavity implies in particular that for all $\theta, \hat \theta_t \in \mathcal{B}(\R^d,B)$ 
\begin{equation}
    \ell_t(\theta) \ge \ell_t(\hat \theta_t) + \nabla \ell_t(\hat \theta_t)^\top (\theta - \hat \theta_t) + \frac{\eta}{2} (\theta - \hat \theta_t)^\top \nabla \ell_t(\hat \theta_t) \nabla \ell_t(\hat \theta_t)^\top (\theta - \hat \theta_t)
    \label{eq:exp-concavity}
\end{equation}
where $\eta  \le \frac{1}{2} \min \{\frac{1}{4GB},\alpha\}$, where $G$ is an upper-bound of the $\ell_2$-norm of the gradients. We refer to $\eta$ as the \emph{curvature} constant. The above inequality provides a quadratic lower approximation of the logistic loss. It plays a crucial role in the analysis of ONS \cite{hazan2007logarithmic} to provide a logarithmic regret upper-bound of order $\smash{O(\frac{1}{\eta}d\log(n))}$. 
We can note that in this inequality, $\eta$ is fixed for all $t\geq 1$ and independent of $\theta$ and $\htheta_t$.  
However, for the logistic loss, the best exp-concavity constant $\alpha >0$ is of
order $e^{-BR}$ which leads to an undesirable exponential multiplicative constant. 

\medskip
Our idea is to replace the worst-case fixed $\eta > 0$ with a data adaptive constant $\eta_t$. To do so, we first remark that at time $t\geq 1$, the curvature constant is bad (i.e., of order $e^{-BR}$) when the prediction $\hat y_t = \hat \theta_t^\top x_t$ of the algorithm was significantly wrong. That is, when $y_t \hat y_t \approx - BR$. In contrast, if the algorithm predicted well the sign of the  next outcome, i.e., if $y_t \hat y_t \geq 0$ then Inequality~\eqref{eq:exp-concavity} holds with a much larger curvature constant greater than $(1+BR)^{-1}$. Based on this high-level idea, we could prove Inequality~\eqref{eq:exp-concavity} by replacing the fixed curvature $\eta >0$ with 
\begin{equation}
    \label{eq:etat}
    \eta_t = \frac{e^{y_t \hat y_t}}{1+BR}\,.
\end{equation}
The latter inequality yielded to our choice of surrogate quadratic approximations $\hat \ell_t$ defined in Equation~\eqref{eq:surrogate_loss_def}. 
This adaptive quadratic lower-approximation of the logistic loss is a direct consequence of the following technical lemma applied with $a = y_t \theta^\top x_t$, $b = y_t \hat y_t$, and $C = BR$. 

\begin{lemma}
\label{lem:quad_approx_lower} Let $C > 0$ and  $f: x \in \R \mapsto \log(1 + e^{-x})$. Then, for all $a \in [-C,C]$ and $ b \in \R$,
\[ 
    f(a) \ge f(b) + f'(b) (a - b) + \frac{e^b}{2(1+C)} f'(b)^2 (a - b)^2  \,.
\]
\end{lemma}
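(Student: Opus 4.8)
Write $f'(x)=-1/(1+e^x)$ and $f''(x)=e^x/(1+e^x)^2=1/\big(4\ch^2(x/2)\big)$, and let $D_f(a,b):=f(a)-f(b)-f'(b)(a-b)$ be the Bregman divergence of $f$. Since $e^{b}f'(b)^2=e^{b}/(1+e^b)^2=f''(b)$, the claim is exactly the curvature estimate $D_f(a,b)\ge \tfrac{f''(b)}{2(1+C)}(a-b)^2$. The plan is to combine the exact integral remainder $D_f(a,b)=\int_b^a (a-t)\,f''(t)\,dt$ with two elementary features of $f''$: it is even and nonincreasing in $|t|$ (so $f''(t)\ge f''(b)$ whenever $|t|\le|b|$), and it decays slowly, $f''(t)\ge e^{-(t-x)}f''(x)$ for all $x$ and all $t\ge x$, which follows from $\ch(y+h)\le e^{h}\ch(y)$ for $h\ge0$. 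First I would observe that $f(-x)=x+f(x)$ and $f'(-x)=-1-f'(x)$ give $D_f(-a,-b)=D_f(a,b)$ and $e^{-b}f'(-b)^2=e^{b}f'(b)^2$, so the whole inequality (and the hypothesis $a\in[-C,C]$) is invariant under $(a,b)\mapsto(-a,-b)$; hence we may assume $a\ge b$, so that $a-t\ge0$ on $[b,a]$.

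Next I would isolate the elementary inequality $\psi(\delta):=\delta-1+e^{-\delta}\ge \tfrac{\delta^2}{2(1+C)}$ for $\delta\in[0,C]$. Setting $\Phi(\delta):=2(1+C)\psi(\delta)-\delta^2$, one has $\Phi(0)=\Phi'(0)=0$ and $\Phi''(\delta)=2(1+C)e^{-\delta}-2$, so $\Phi$ is convex on $[0,\log(1+C)]$ — hence $\ge0$ there — and concave on $[\log(1+C),\infty)$, so on $[\log(1+C),C]$ it is bounded below by the minimum of its endpoint values; the left endpoint is already nonnegative, and $\Phi(C)=C^2-2+2(1+C)e^{-C}$ is nonnegative because it vanishes at $C=0$ and has nonnegative derivative $2C(1-e^{-C})$. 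This step genuinely uses $\delta\le C$: the bound $\psi(\delta)\ge\delta^2/(2(1+C))$ is false for large $\delta$, which is precisely where the boundedness of $a$ enters.

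With these in hand I would finish by a short case analysis on the position of $[b,a]$ relative to $0$. If $a\le|b|$ (in particular if $a\le0$), then $[b,a]\subseteq[-|b|,|b|]$, so $f''(t)\ge f''(b)$ on $[b,a]$ and $D_f(a,b)\ge \tfrac{f''(b)}{2}(a-b)^2$, more than enough. If $0\le b<a$, the slow-decay bound with $x=b$ gives $D_f(a,b)\ge f''(b)\int_b^a(a-t)e^{-(t-b)}dt=f''(b)\,\psi(a-b)$, and $0\le a-b\le a\le C$ lets us invoke the elementary inequality. Finally if $b<0<-b<a$, I would split the integral at $t=-b$: on $[b,-b]$ use $f''(t)\ge f''(b)$, and on $[-b,a]$ use slow decay with $x=-b$ together with $f''(-b)=f''(b)$, obtaining $D_f(a,b)\ge f''(b)\big(-2ab+\psi(a+b)\big)$; since $0<a+b<a\le C$, the elementary inequality bounds $\psi(a+b)$, and combining with $-2ab\ge \tfrac{-2ab}{1+C}$ and the identity $-4ab+(a+b)^2=(a-b)^2$ yields the claim.

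The main obstacle is that the target curvature $\tfrac{f''(b)}{1+C}$ can be strictly larger than $f''(t)$ for $t$ in portions of $[a,b]$, so there is no pointwise lower bound $f''(t)\ge f''(b)/(1+C)$ to lean on; one must exploit the $(a-t)$ weighting, which concentrates most of its mass near $t=b$ where $f''$ is comparable to $f''(b)$, together with the unimodal even shape of $f''$. The genuinely delicate case is $b<0\le a$, where neither endpoint of $[b,a]$ dominates $f''$ on the whole interval, and one must cut at the peak $t=0$ and transport the estimate across it using $f''(-b)=f''(b)$; making the two pieces recombine exactly into $(a-b)^2$ is the only non-mechanical part of the argument.
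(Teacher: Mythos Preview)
Your proof is correct, and it follows a genuinely different route from the paper.

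The paper differentiates in $a$: setting $\xi(a)$ equal to the difference of the two sides, it shows $\xi(b)=0$ and that $\xi'(a)$ has the right sign on either side of $b$. This reduces to the inequality
\[
\frac{1}{1+C}\ \le\ \frac{e^{a-b}-1}{a-b}\cdot\frac{1+e^{b}}{1+e^{a}}\,,
\]
which is then established (in a separate lemma) in the stronger form with $1/(1+|a|)$ on the left, by proving the right-hand side is convex in $b$, locating its minimizer in the interval between $0$ and $a$, and bounding it there by $1/(1+|a-b|)$.

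Your argument instead works directly with the integral remainder $D_f(a,b)=\int_b^a(a-t)f''(t)\,dt$ and exploits three concrete features of $f''$ (evenness, unimodality, and the one-sided decay $f''(t)\ge e^{-(t-x)}f''(x)$ for $t\ge x$) together with the scalar inequality $\delta-1+e^{-\delta}\ge \delta^{2}/(2(1+C))$ on $[0,C]$; a short case split on the position of $[b,a]$ relative to $0$ finishes. Compared with the paper, your approach is more hands-on and makes it transparent exactly where the restriction $|a|\le C$ is used (only through $a-b\le C$ or $a+b\le C$ in the $\psi$-bound). The paper's approach, in exchange, isolates a clean $b$-uniform inequality $\frac{1}{1+|a|}\le \frac{e^{a-b}-1}{a-b}\cdot\frac{1+e^{b}}{1+e^{a}}$ that may be reusable on its own, at the cost of a slightly less elementary convexity/minimizer argument.
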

The proof is postponed to the supplementary material (see Appendix~\ref{app:lemmas}).

\subsection{Improper regularization}

The other key ingredient of our analysis is to ensure that only the rounds where the curvature $\eta_t$~\eqref{eq:etat} are  large matter in the analysis. This is the role of our new improper regularization added in the definition~\eqref{eq:def_estimator} of $\hat \theta_t$. The underlying idea is to add the possible next losses $\ell(\theta^\top x_t,1)$ and $\ell(\theta^\top x_t,-1)$ to the minimization problem solved by \namealgo (see~\eqref{eq:def_estimator}). 

We explain now the high-level idea why this regularization  helps when $\eta_t$ is small.
We need to distinguish two cases. 
On the one hand, if the prediction is good, i.e., $\hat y_t$ and $y_t$ have same signs. Then, $\eta_t \propto \exp(y_t \hat y_t)$ is large and since the prediction is already good. Thus, the regularization does not hurt much.
On the other hand, when $\hat y_t$ and $y_t$ have opposite signs, the curvature parameter may be exponentially small. But, then the addition of $\ell_t(\theta^\top x_t, y_t)$ greatly improves the predictions of the algorithm in these cases, because the data point $(x_t,y_t)$ was already included in the history when optimizing $\hat \theta_t$ in~\eqref{eq:regret_def}. Moreover, the  addition of the the wrong output $-y_t$ does not impact much the prediction since in $\hat y_t$ we have
\[
    \ell(\hat y_t,-y_t) = \log\big(1+e^{y_t \hat y_t}\big) = \log\big(1+(1+BR)\eta_t\big) 
\]
which is small whenever $\eta_t$ is small.

\section{Extensions}

\subsection{Non-parametric setting}
\label{sec:nonparametric}

For the sake of simplicity, the analysis of the present paper was only carried out for finite dimensional logistic regression in $\R^d$. Yet, most of the results remain valid for Reproducing Kernel Hilbert Spaces (RKHS) $\mathcal{H}$ (see~\cite{aronszajn1950theory} for details on RKHS). Then, Theorem \ref{thm:main_theorem} holds by replacing the finite dimension $d\geq 1$ with the effective dimension 
\[
\deff(\lambda) = \operatorname{Tr}(K_{nn}(K_{nn} + \lambda I)^{-1}) \,,
\]
where the input matrix $K_{nn}$ is defined as $(K_{nn})_{i,j} = x_i^\top x_j$. The regret is then of order $O(B\deff(B\lambda) + \lambda B^2)$. Note that the effective dimension is always upper-bounded by $\deff(\lambda) \leq n/\lambda$, providing in the worst case, the regret upper-bound of order $O(B\sqrt{n})$ for well-chosen $\lambda$. Under the capacity condition, which is a classical assumption for kernels (see \cite{marteau2019beyond} for instance), better bounds on the effective dimension are provided which yield to faster regret rates. 

In the case of RKHS, using standard kernel trick, the total computational complexity of the algorithm is then $O(n^3)$. The latter might be however prohibitive in large dimension. An interesting research direction is to investigate whether we can apply standard  approximation techniques such as random features or Nyström projection similarly to what \cite{calandriello2017efficient} and \cite{jezequel2019efficient} did for exp-concave and square loss respectively. In particular, what is the trade-off between computational complexity and regret and what is the lowest complexity that still allows optimal regret?

\subsection{Online-to-batch conversion}

Even in the batch statistical setting, the lower-bound of \cite{hazan2014logistic} holds for proper algorithms and few improper algorithms where introduced to avoid the statistical constant $O(e^B)$. Using the standard online-to-batch conversion \cite{helmbold1995weak}, similarly to the algorithm of \cite{foster2018logistic}, our algorithm also provides an estimator with bounded excess risk in expectation. To do so, one can sample an index $\tau$ uniformly in $\{1,\dots,n\}$ and define the estimator $\bar f_n$ defined for all $x \in \cX$ by
\begin{equation}
    \label{eq:batch_estimator}
   \bar f_n(x) = \hat f_\tau(x) \quad  \text{with} \quad  \hat f_t(x) = \hat \theta_t(x)^\top x, \qquad  1\leq t\leq T\,,
\end{equation}
where $\hat \theta_t(x)$ is the solution of the minimization problem~\eqref{eq:def_estimator} by substituting $x_t$ with the new input $x \in \cX$. It is worth pointing out that $\bar f_n(x) \neq \hat \theta_t^\top x$ is a non-linear function in $x$ and is thus improper. The following corollary controls the excess-risk of $\bar f_t$ in expectation. Its proof is standard but short and we recall it for the sake of completeness.

\begin{corollary}[Online-to-batch conversion]
\label{cor:batch} 
Let $n,d\geq 1$ and $B,R>0$. Let $\nu$ be an unknown distribution over $\cB(\R^d,R) \times \{-1,1\}$ and $D_n = \big\{(x_i,y_i)\big\}_{1\leq i\leq n}$ be i.i.d. sampled from $\nu$. Then, the estimator $\bar f_n$ defined in Equation~\eqref{eq:batch_estimator} with $\lambda = 1/B^2$ satisfies
\[
    \E\big[\ell(\bar f_n(X),Y)\big] - \inf_{\theta \in \mathcal{B}(\R^d,B)} \E\big[\ell(f(X),Y)\big] \leq  \frac{1}{n} \left[d(1 + BR)  \log \left(1 + \frac{nB^2R^2}{8d(1+BR)}\right) + 1 \right] \,,
\]
where $(X,Y) \sim \nu$ and the expectations are taken over $(X,Y)$, $D_n$ and $\tau$. 
\end{corollary}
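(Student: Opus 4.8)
The plan is to apply the standard online-to-batch conversion argument. The key observation is that $\bar f_n$, as defined in Equation~\eqref{eq:batch_estimator}, is a randomized predictor whose expected loss on a fresh point equals the average over $t$ of the losses $\ell(\hat f_t(X), Y)$ where $\hat \theta_t(X)$ is computed by solving the minimization problem~\eqref{eq:def_estimator} with $x_t$ replaced by the test input $X$. The crucial point that makes the argument work is that, conditionally on $D_{t-1} = \{(x_i,y_i)\}_{i<t}$, the quantity $\hat \theta_t(X)$ depends only on $D_{t-1}$ and $X$, and since $(X,Y)$ is drawn i.i.d.\ from $\nu$ independently of $D_{t-1}$, the pair $(X,Y)$ plays exactly the role that $(x_t,y_t)$ would play in the online game. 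Hence $\E[\ell(\hat f_t(X),Y) \mid D_{t-1}] = \E[\ell_t(\hat\theta_t) \mid D_{t-1}]$ when $(x_t,y_t)$ is itself drawn i.i.d.

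First I would fix $\theta^\star \in \mathcal{B}(\R^d,B)$ achieving (or approaching) the infimum $\inf_{\theta} \E[\ell(\theta^\top X, Y)]$. By definition of $\bar f_n$ and the tower rule over $\tau$ and $D_n$,
\[
\E[\ell(\bar f_n(X),Y)] = \frac{1}{n}\sum_{t=1}^n \E\big[\ell(\hat f_t(X),Y)\big] = \frac{1}{n}\sum_{t=1}^n \E\big[\ell_t(\hat\theta_t)\big],
\]
where in the last equality I use that when the data $(x_1,y_1),\dots,(x_n,y_n)$ are i.i.d.\ from $\nu$, relabeling $(X,Y)$ as $(x_t,y_t)$ is distributionally exact, so $\E[\ell(\hat f_t(X),Y)] = \E[\ell(\hat\theta_t^\top x_t, y_t)] = \E[\ell_t(\hat\theta_t)]$. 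Next I would subtract $\E[\ell(\theta^{\star\top}X,Y)] = \frac{1}{n}\sum_{t=1}^n \E[\ell(\theta^{\star\top}x_t,y_t)]$ from both sides to obtain
\[
\E[\ell(\bar f_n(X),Y)] - \E[\ell(\theta^{\star\top}X,Y)] = \frac{1}{n}\,\E\!\left[\sum_{t=1}^n \ell_t(\hat\theta_t) - \sum_{t=1}^n \ell(\theta^{\star\top}x_t,y_t)\right] = \frac{1}{n}\,\E[R_n(\theta^\star)].
\]
Then I would invoke Theorem~\ref{thm:main_theorem} with $\lambda = 1/B^2$, which holds pointwise for every sequence of observations and every $\theta \in \mathcal{B}(\R^d,B)$, in particular for $\theta^\star$; taking expectations preserves the bound~\eqref{eq:regret_bound}, giving $\E[R_n(\theta^\star)] \le d(1+BR)\log(1 + nB^2R^2/(8d(1+BR))) + 1$, and dividing by $n$ yields the claimed inequality. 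A small $\epsilon$-argument lets one pass from the near-minimizer to the infimum, or one simply notes the infimum over a compact ball is attained since $\ell$ is continuous.

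The step that requires the most care is justifying $\E[\ell(\hat f_t(X),Y)] = \E[\ell_t(\hat\theta_t)]$, i.e., that substituting the test point into the improper predictor and then averaging against $\nu$ reproduces the online loss. This is where the improperness interacts subtly with the conversion: $\hat\theta_t(X)$ is not a fixed function but depends on $X$ through the regularization terms $\ell(\theta^\top X, 1) + \ell(\theta^\top X, -1)$. The resolution is that this dependence is on $X$ only (not on $Y$), exactly as in the online protocol where $\hat\theta_t$ may depend on $x_t$ but not $y_t$; and since in the i.i.d.\ setting $D_{t-1}$ and $(X,Y)$ have the same joint law as $D_{t-1}$ and $(x_t,y_t)$, the identity follows by conditioning on $D_{t-1}$ and $X$ (equivalently on $D_{t-1}$ and $x_t$) and then taking expectation. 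Beyond this, the proof is entirely routine.
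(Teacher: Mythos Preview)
Your proposal is correct and follows essentially the same online-to-batch argument as the paper's own proof: identify $\E[\ell(\bar f_n(X),Y)]$ with $\tfrac{1}{n}\sum_t \E[\ell_t(\hat\theta_t)]$ via the i.i.d.\ relabeling, then invoke the regret bound of Theorem~\ref{thm:main_theorem}. You are actually more explicit than the paper in justifying the key equality $\E[\ell(\hat f_t(X),Y)] = \E[\ell_t(\hat\theta_t)]$ by conditioning on $D_{t-1}$ and noting that $\hat\theta_t(\cdot)$ depends on the input but not the label, which is exactly the point the paper handles with the terse remark that ``$\hat f_t$ is independent of $(x_t,y_t)$''.
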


\begin{proof}
Let us denote by $R_n^b$ the upper-bound on the regret in the right-hand side of Equation~\eqref{eq:regret_bound}. Then,
\begin{eqnarray*}
\E  \big[\ell(\bar f_n(X),Y)\big]  & = & \E\big[\ell( \hat f_\tau(X),Y)\big] \  =\  \E\Big[\frac{1}{n} \sum_{t=1}^n  \ell(\hat f_t(X),Y)\Big]\  \stackrel{(*)}{=}\  \E\Big [\frac{1}{n} \sum_{t=1}^n  \ell\big(\hat f_t(x_t),y_t\big)\Big] \\
& \leq &  \E \Bigg[\frac{1}{n} \sum_{t=1}^n \ell(\theta^\top x_t, y_t) \Big] + \frac{R_n^b}{n} \ \stackrel{(*)}{=} \ \E \bigg[\frac{1}{n} \sum_{t=1}^n \ell(\theta^\top X, Y) \bigg] + \frac{R_n^b}{n}\,,
\end{eqnarray*}
where the equalities $(*)$ are because $(X,Y)$ and $(x_t,y_t)$ follow the same distribution and because $\hat f_t$ and $\theta$ are independent of $(x_t,y_t)$ by definition.
\end{proof}

Apart from \cite{foster2018logistic}, which is non-practical and also based on an online-to-batch conversion, we are only aware of the works of \cite{mourtada2019improper} and \cite{marteau2019beyond} that improve the exponential constant $O(e^B)$ in the statistical setting. \cite{marteau2019beyond} make additional assumptions on the data distribution (self-concordance, well-specified model, capacity and source conditions). Their framework is hardly comparable to ours with constants that may be arbitrarily large in our setting. In contrast, the recent work of  \cite{mourtada2019improper} do provide an improper estimator that satisfies a result very similar to Corollary~\ref{cor:batch} with an expected bound on the excess risk of order
$O(d+B^2R^2)$. Our upper-bound is slightly worse with an additional multiplicative factor $BR \log (BRn)$.  The $\log n$ is due to the online setting in which it is optimal, see for instance the lower-bound of \cite{vovk2001competitive}. 
Their estimator is based on an empirical regularized risk minimization (with the original losses) with an additional improper regularization using virtual data. 
They do not analyze the computational complexity but we believe it to be similar to ours. To conclude the comparison, note that in contrast to ours, their analysis relies on the self-concordance property of the logistic loss in contrast to ours.


\section{Simulations}

This section illustrates our theoretical results with synthetic experiments and compare the performance of three algorithms: FTRL with $\ell_2$-regularization and $\lambda = 1$, \namealgo, and the one of~\cite{foster2018logistic}.
We sample the data points $(x_t,y_t) \in \R \times \{-1,1\}$ according to the  \textit{adversarial} distributions designed by \cite{hazan2014logistic} to prove the exponential lower bound for proper algorithms. We consider only the case $d = 1$ because the lower bound for proper algorithms already applies and the algorithm of \cite{foster2018logistic} is practical in this case.  Let $n \geq 1$, $B=\log(n)$, $\chi \in \{-1,1\}$ and $\epsilon = 0.01$, the data $(x_t,y_t)_{1\leq t\leq n}$ are i.i.d. generated according to  
\begin{equation*}
    (x_t,y_t) = \begin{cases}
               (1 - \frac{\sqrt{\epsilon}}{2B}, 1) & \text{w.p. } \frac{\sqrt{\epsilon}}{2B} + \chi \frac{\epsilon}{B} \\
               (\frac{\sqrt{\epsilon}}{B}, -1) & \text{otherwise}
           \end{cases} \,.
\end{equation*}
The experiment is averaged over 10 simulations  for $\chi = -1$ and 10 others for $\chi = 1$. We plot in Figure~\ref{fig:rates_plot} the worst of these two average regrets obtained by each algorithm according to the value of $n$. The lower-bound of \cite{hazan2014logistic} ensures that any proper algorithm has at least a regret of order $\Omega(n^{1/3})$ for these data. As expected, the regret of FTRL is polynomial in $n$ (linear slope in log-log scale) while the ones of \namealgo and the algorithm of \cite{foster2018logistic} are poly-logarithmic.

\begin{figure}
    \centering
    \includegraphics[width=.6\textwidth]{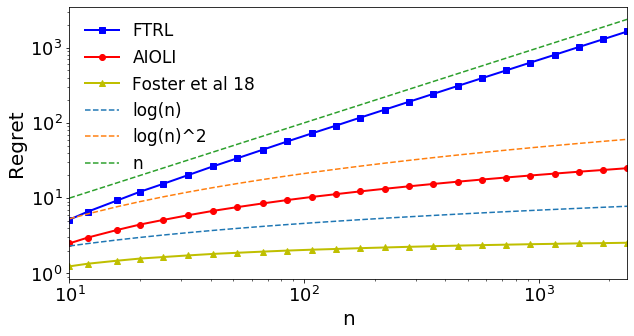}
    \caption{Regret of each algorithm according  to number of sample $n$ in log-log scale.}
    \label{fig:rates_plot}
\end{figure}

\section{Conclusion and future work}

To sum up, we designed a new efficient improper algorithm for online logistic regression. The latter only suffers logarithmic regret with much improved complexity compared to other existing methods. Some interesting questions are still remaining and left for future work.

\medskip 
Our online-to-batch procedure only provides upper-bounds in expectation. Obtaining high-probability bounds is more challenging and universal conversion methods such as the one of \cite{mehta2016fast} may not work for improper procedures. 

Another interesting direction for future research is the extension to multi-class classification. Our analysis strongly relies on binary outputs to produce the improper regularization and the extension to multi-class is not straightforward. The next step would then be to extend the results to other settings considered by \cite{foster2018logistic} such as  bandit multi-class learning or online multi-class boosting. More generally, it would be interesting to study what are the class of functions where adaptive \textit{curvature} parameters and improper learning yield to improved guarantees. 

Finally, as shown in Section~\ref{sec:nonparametric}, \namealgo may be applied to nonparametric logistic regression in RKHS. However, without any approximation schemes, the computational complexity may become prohibitive of order $O(n^3)$. Therefore, a possible line of research would be to study how much the performance of our algorithm would be affected by standard approximations techniques as Nyström projections or random features. 

\section*{Acknowledgements}

This work was funded in part by the French government under management of Agence Nationale de la Recherche as part of the "Investissements d’avenir" program, reference ANR-19-P3IA-0001 (PRAIRIE 3IA Institute).

\bibliography{biblio}
\bibliographystyle{apalike}

\newpage
\appendix

\section{Notation} \label{app:notation}
In this section, we recall and define useful notations that will be used all along the proofs. At each round $t \ge 1$, we recall that the forecaster is given an input $x_t \in \cX \subset \cB(\R^d,R)$; chooses a prediction $\hat \theta_t \in \R^d$; forms the prediction $\hat y_t = \hat \theta_t^\top x_t$; and observes the outcome $y_t \in \{-1,1\}$. The loss of a parameter $\theta \in \R^d$ at time $t\geq 1$ is measured by $\ell_t(\theta) = \ell(\theta^\top x_t, y_t) = \log(1+e^{-y_t \theta^\top x_t})$.

\medskip\noindent
We also define for all $t\geq 1$, all $\theta \in \R^d$ and $y \in \{-1,1\}$:
\begin{itemize}[label={-},parsep=2pt]
    \item the loss suffered by $\theta$ if the outcome was $y$: \qquad $\ell_t^y(\theta) = \log(1 + e^{-y \theta^\top x_t})$
    \item the gradient of the loss in $\hat \theta_t$ if the outcome was $y$:  \qquad $g_t^y = \nabla \ell_t^y(\htheta_t)$
    \item the curvature if the outcome was $y$: \qquad $\eta_t^y = \frac{e^{y \htheta_t^\top x_t}}{1+BR}$
    \item the quadratic surrogate losses if the outcome was $y$: \\
    \hspace*{1cm}$\hat \ell_t^y(\theta) = \ell_t^y(\htheta_t) + g_t^{y\top} (\theta - \htheta_t) + \frac{\eta_t^y}{2} (\theta - \htheta_t)^\top g_t^y g_t^{y\top} (\theta - \htheta_t)$
    \item the corresponding loss, surrogate loss, gradient, and curvature for the true outcome $\hat y_t$: \\
    \hspace*{1cm} $\ell_t = \ell_t^{y_t}$, \quad  $\hat \ell_t = \hat \ell_t^{y_t}$,\quad  $g_t = g_t^{y_t}$,\quad  $\eta_t = \eta_t^{y_t}$
    \item the regularized cumulative loss and cumulative surrogate loss respectively:  \\
    \hspace*{1cm} $L_t(\theta) = \sum\limits_{s=1}^t \ell_s(\theta) + \lambda \|\theta\|^2$,\quad  $\hat L_t(\theta) = \sum\limits_{s=1}^t \hat \ell_s(\theta) + \lambda \|\theta\|^2$
\end{itemize}
With these notations, we defined $\theta_{t}$ and $\bar \theta_{t}$ as:
\begin{equation}
\label{eq:recall_def_theta}
\theta_{t} = \argmin{\theta \in \R^d} \hat L_{t-1}(\theta)  \,,\quad  \text{and}\quad \bar \theta_t = \argmin{\theta \in \R^d} \left\{ \hat L_{t-1}(\theta) + \ell_t^{1}(\theta) + \ell_t^{-1}(\theta) \right\} \,.
\end{equation}

\section{Proof of the main theorem}

\label{app:proof_main_thm}

\begin{theorem}
\label{thm:main_theorem_approx}
Let $\epsilon, \lambda, R>0$ and  $d, n\geq 1$. Let $(x_1,y_1),...,(x_n,y_n) \in \cX \times \cY$ be an arbitrary sequence of observations.  Define $\bar \theta_t$ for $t
\geq 1$ as in Equation \eqref{eq:recall_def_theta} with regularization parameter $\lambda > 0$. Then, any estimator $\htheta_t$ which verifies for all $t \geq 1$, $\|\htheta_t - \bar \theta_t\| \le \epsilon$  satisfies the following upper-bound on the regret
\[ 
R_n(\theta) \le \lambda \|\theta\|^2 + d(1 + BR) \log \left( 1 + \frac{n R^2}{8d(1+BR)\lambda} \right) + 3nR\left( \frac{nR^2}{8\lambda} + B \right) \epsilon \,.
\]
\end{theorem}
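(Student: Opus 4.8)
The plan is to prove Theorem~\ref{thm:main_theorem_approx} in two stages: first establish the regret bound for the exact minimizer $\bar\theta_t$ (this recovers Theorem~\ref{thm:main_theorem}), then quantify the error introduced by replacing $\bar\theta_t$ with an approximation $\htheta_t$ satisfying $\|\htheta_t-\bar\theta_t\|\le\epsilon$. Throughout I fix $\theta\in\cB(\R^d,B)$ and use the notation of Appendix~\ref{app:notation}, in particular $A_t = \lambda I + \tfrac12\sum_{s=1}^t \eta_s g_s g_s^\top$.

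For the exact part I would follow the sketch given after Theorem~\ref{thm:main_theorem}. \emph{Step 1:} apply Lemma~\ref{lem:quad_approx_lower} with $a = y_t\theta^\top x_t \in [-BR,BR]$, $b = y_t\hat y_t$, $C = BR$ to get $\hat\ell_t(\theta)\le\ell_t(\theta)$; combined with the identity $\hat\ell_t(\htheta_t)=\ell_t(\htheta_t)$ this yields $R_n(\theta)\le\hat R_n(\theta)$ where $\hat R_n$ is the regret against the surrogate losses. \emph{Step 2:} since $\hat\ell_s$ is quadratic, mimic the Azoury--Warmuth--Vovk / \cite{gaillard2018uniform} analysis for online ridge regression. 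Writing $\theta_{t+1}$ for the minimizer of $\hat L_t$, a standard ``be-the-leader'' telescoping together with the first-order optimality conditions for $\bar\theta_t$ (which here includes the two virtual losses $\ell_t^1+\ell_t^{-1}$) gives
\[
\hat R_n(\theta)\le \sum_{t=1}^n\Big[(\theta_{t+1}-\htheta_t)^\top A_t(\theta_{t+1}-\htheta_t)-(\theta_t-\htheta_t)^\top A_{t-1}(\theta_t-\htheta_t)\Big],
\]
and after using the definition of $\htheta_t=\bar\theta_t$ one bounds each summand by $-\tfrac12 g_t^\top A_t^{-1}g_t^{-y_t}$. The crucial algebraic identity is $g_t^{-y_t} = -(1+BR)\eta_t g_t$, which converts this into $(1+BR)\sum_t\tfrac{\eta_t}{2}g_t^\top A_t^{-1}g_t$; this is now the familiar $\log\det$ telescoping sum, bounded via $\tfrac12\eta_t g_t^\top A_t^{-1}g_t\le\log\frac{\det A_t}{\det A_{t-1}}$ and $\det A_n \le (\lambda + \tfrac{nR^2}{8d})^d$ using $\eta_t\|g_t\|^2\le R^2/4$ (since $\eta_t\le 1/(1+BR)$ and $\|g_t\|\le R$, giving $\eta_t\|g_t\|^2 \le R^2/(1+BR)\le R^2$, and a sharper constant $R^2/4$ from $|f'|\le 1$ combined with the $e^b$ factor). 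This produces the first two terms $\lambda\|\theta\|^2 + d(1+BR)\log(1+\tfrac{nR^2}{8d(1+BR)\lambda})$.

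For the approximation part, I would bound $\sum_t[\ell_t(\htheta_t)-\ell_t(\bar\theta_t)]$ directly: since $\ell_t$ is $R$-Lipschitz (as $\|x_t\|\le R$ and $|f'|\le1$), each term is at most $R\|\htheta_t-\bar\theta_t\|\le R\epsilon$. But the theorem's error term is $3nR(\tfrac{nR^2}{8\lambda}+B)\epsilon$, which is larger and has the right shape for also controlling how the \emph{surrogate} regret analysis degrades — the factor $\tfrac{nR^2}{8\lambda}+B$ is a bound on $\|\bar\theta_t\|$ (from $\lambda\|\bar\theta_t\|^2\le$ cumulative loss $\le$ something like $\tfrac{nR^2}{8}$ plus the optimality comparison, actually $\|\bar\theta_t\|\lesssim\sqrt{\text{regret}/\lambda}+B$), so the extra slack absorbs the perturbation of the quadratic terms $(\theta_{t+1}-\htheta_t)^\top A_t(\theta_{t+1}-\htheta_t)$ and the change in $\eta_t,g_t$ induced by using $\htheta_t$ rather than $\bar\theta_t$ in their definitions. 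Concretely I would (i) note $\ell_t(\htheta_t)$ and the curvature/gradient used in round $t$ are now evaluated at $\htheta_t$, (ii) bound the discrepancy in each of the $\le n$ rounds by the Lipschitz/smoothness constants times $\epsilon$, and (iii) collect; the coefficient $3$ and the norm bound $\tfrac{nR^2}{8\lambda}+B$ then fall out of keeping track of $\|A_t\|\le\lambda+nR^2/8$ and $\|\htheta_t\|,\|\theta_{t+1}\|\le\tfrac{nR^2}{8\lambda}+B$.

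The main obstacle is Step~2's telescoping with the improper virtual-data regularization: one must verify that adding $\ell_t^1(\theta)+\ell_t^{-1}(\theta)$ to the FTRL objective still yields a clean per-round bound, specifically that the first-order optimality condition for $\bar\theta_t$ produces exactly the gradient $g_t^{-y_t}$ in the cross term (this is where the identity $g_t^{-y_t}=-(1+BR)\eta_t g_t$ enters and why the bound does not blow up when $\eta_t$ is exponentially small). A secondary technical point is getting the constant $1/(8)$ right in the $\log\det$ estimate, which needs the sharp inequality $\eta_t^y\|g_t^y\|^2\le R^2/4$ uniformly in $y\in\{-1,1\}$ — this follows from $\eta_t^y (f'(y\hat y_t))^2 = \tfrac{e^{y\hat y_t}}{1+BR}\cdot\tfrac{1}{(1+e^{y\hat y_t})^2}\le \tfrac{1}{4(1+BR)}$ and $\|x_t\|^2\le R^2$. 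I would present the exact analysis as a self-contained lemma and then the perturbation bound as a short second lemma, combining them at the end.
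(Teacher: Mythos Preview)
Your Step~1 and the core of Step~2 match the paper: the surrogate-loss reduction via Lemma~\ref{lem:quad_approx_lower}, the quadratic telescoping, the identity $g_t^{-y_t}=-(1+BR)\eta_t g_t$, and the $\log\det$ bound with the sharp estimate $\eta_t\|g_t\|^2\le R^2/(4(1+BR))$ are all exactly as in the paper's proof.

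Where you diverge is the handling of the approximation error, and there the two-stage plan (prove for $\bar\theta_t$, then perturb) has a structural gap. The surrogate losses $\hat\ell_s$ are built from $g_s=\nabla\ell_s(\htheta_s)$ and $\eta_s$ evaluated at the \emph{actually played} $\htheta_s$, so $\bar\theta_t$ (the minimizer of $\hat L_{t-1}+\ell_t^1+\ell_t^{-1}$) already depends on the approximate history. There is no independent ``exact'' trajectory whose regret you can invoke and then correct by $\sum_t[\ell_t(\htheta_t)-\ell_t(\bar\theta_t)]$: that sum is indeed at most $nR\epsilon$, but nothing in your outline controls $\sum_t\ell_t(\bar\theta_t)-\ell_t(\theta)$, since $\bar\theta_t$ is not the iterate of any self-contained algorithm. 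The paper avoids this by running the entire Step~2 analysis with $\htheta_t$ from the outset. The Newton-step identities expressing $\theta_{t+1}$ and $\theta_t$ in terms of $\htheta_t$ then carry a residual
\[
\delta_t:=\nabla\hat L_{t-1}(\htheta_t)+g_t+g_t^{-y_t},
\]
which is precisely the gradient at $\htheta_t$ of the objective $\bar\theta_t$ minimizes, hence zero when $\htheta_t=\bar\theta_t$. The instant regret splits cleanly as $Z_t+\Omega_t$, where $Z_t$ is the variance term you identified and $\Omega_t$ collects the $\delta_t$ cross-terms; one bounds $\Omega_t\le\tfrac{3R}{2\lambda}\|\delta_t\|$ via $\|A_{t-1}^{-1}\|\le\lambda^{-1}$ and $\|g_t^{\pm y_t}\|\le R$, and then $\|\delta_t\|\le\big[(t{+}1)R^2/4+2\lambda B\big]\epsilon$ from the $R^2/4$-Lipschitzness of each $\nabla\hat\ell_s$ and $\nabla\ell_t^{\pm1}$. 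So the factor $nR^2/(8\lambda)+B$ arises from the Lipschitz constant of the cumulative gradient map, not from a norm bound on $\bar\theta_t$ as you conjectured.
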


\begin{proof}
Let $\theta \in \cB(\R^d,B)$. Let us first upper-bound the regret $R_n(\theta)$ by the regret using the surrogate losses.  Applying Lemma~\ref{lem:quad_approx_lower} with $a = y_t\theta^\top x_t \in [-BR,BR]$ and $b \in y_t \hat \theta_t^\top x_t \in \R$, we have for all $t\geq 1$:
\[
\ell_t(\theta) \ge \hat \ell_t(\theta).
\]
Together with $\ell_t(\hat \theta_t) = \hat \ell_t(\hat \theta_t)$, it yields that the regret on the true loss is upper-bounded by the regret on the quadratic approximations 
\begin{align}
\label{eq:R_bound_Rhat}
    R_n(\theta) = \sum\limits_{t=1}^n \ell_t(\htheta_t) - \sum\limits_{t=1}^n \ell_t(\theta) \le \sum\limits_{t=1}^n \hat \ell_t(\htheta_t) - \sum\limits_{t=1}^n \hat \ell_t(\theta) = \hat R_n(\theta) \,.
\end{align}
Now, we are left with analyzing a quadratic problem. We can thus follow in the main lines the proof of \cite{gaillard2018uniform} for online least squares.  By definition of $\theta_{n+1} = \argmin{\theta \in \R^d} \hat L_{n}(\theta)$,  $\hat L_n(\theta_{n+1}) \le \hat L_n(\theta)$, which can be written as
\begin{align*} \sum\limits_{t=1}^n \hat \ell_t(\theta_{n+1}) - \sum\limits_{t=1}^n \hat \ell_t(\theta) &\le \lambda \|\theta\|^2 - \lambda \|\theta_{n+1}\|^2 \,.
\end{align*}
Now, the regret can be upper-bounded as
\begin{align}
    \hat R_n(\theta) &\le \lambda \|\theta\|^2 + \sum\limits_{t=1}^n \hat \ell_t(\htheta_t) - \sum\limits_{t=1}^n \hat \ell_t(\theta_{n+1})  - \lambda \|\theta_{n+1}\|^2 \nonumber \\
    &= \lambda \|\theta\|^2 + \sum\limits_{t=1}^n \left[ \hat \ell_t(\htheta_t) + \hat L_{t-1}(\theta_t) - \hat L_t(\theta_{t+1}) \right] \,.
    \label{eq:quadratic_regret}
\end{align}
With a little bit of abuse, we call the terms inside the sum on the right the \emph{instant regrets}. Grouping the terms of same degrees in the quadratic approximation,
\begin{align*}
    \hat \ell_t(\theta) &= \ell_t(\htheta_t) + g_t^\top (\theta - \htheta_t) + \frac{\eta_t}{2} (\theta - \htheta_t) g_t g_t^\top (\theta - \htheta_t) \\
    &= \ell_t(\htheta_t) - g_t^\top \htheta_t + \frac{\eta_t}{2} \htheta_t^\top g_t g_t^\top \htheta_t + g_t^\top \theta - \eta_t \theta^\top g_t g_t^\top \htheta_t + \frac{\eta_t}{2} \theta^\top g_t g_t^\top \theta \\
    &= c_t^* - 2b_t^{* \top} \theta + \frac{\eta_t}{2} \theta^\top g_t g_t^\top \theta
\end{align*}
with $c_t^* = \ell_t(\htheta_t) - g_t^\top \htheta_t + \frac{\eta_t}{2} \htheta_t^\top g_t g_t^\top \htheta_t$ and $b_t^* = \frac{1}{2} \big(- g_t + \eta_t (\htheta_t^\top g_t) g_t \big)$. Similarly, we can write the cumulative loss as
\begin{equation}
\label{eq:L_hat_quadratic}
\hat L_t(\theta) = \underbrace{\sum\limits_{s=1}^t c_s^*}_{c_t} - 2  \bigg(\underbrace{ \sum\limits_{s=1}^t b_s^*}_{b_t}\bigg)^\top \theta + \theta^\top \bigg(\underbrace{ \lambda I + \sum\limits_{s=1}^t \frac{\eta_s}{2} g_s g_s^\top }_{A_t}\bigg) \theta \,.
\end{equation}
The minimum of this quadratic, reached in $\theta_{t+1} = A_t^{-1} b_t$, is 
\[
    \hat L_t(\theta_{t+1}) = c_t - 2 \underbrace{b_t^\top A_t^{-1}}_{\theta_{t+1}} A_t \theta_{t+1} + \theta_{t+1}^\top A_t \theta_{t+1} 
    = c_t - \theta_{t+1}^\top A_t \theta_{t+1} \,.
\]
We can write now the instant regret at $t$ as
\begin{align}
    \hat \ell_t(\htheta_t) + \hat L_{t-1}(\theta_t) - \hat L_t(\theta_{t+1})
    &= \hat \ell_t(\htheta_t) - c_t^* + \theta_{t+1}^\top A_t \theta_{t+1} - \theta_t^\top A_{t-1} \theta_t \nonumber \\
    &= g_t^\top \htheta_t - \frac{\eta_t}{2} \htheta_t^\top g_t g_t^\top \htheta_t + \theta_{t+1}^\top A_t \theta_{t+1} - \theta_t^\top A_{t-1} \theta_t \nonumber \\
    &= g_t^\top \htheta_t - \htheta_t^\top (A_t - A_{t-1}) \htheta_t   + \theta_{t+1}^\top A_t \theta_{t+1} - \theta_t^\top A_{t-1} \theta_t \,. \label{eq:instant_regret1}
\end{align}
The oracle $\theta_{t+1}$ minimizes the quadratic function $\hat L_t$ with Hessian $2 A_t$. Thus, performing one newton step from $\htheta_t$ gives
\begin{align}
\label{eq:rel_theta_1}
    \theta_{t+1} &= \hat \theta_t - \frac{1}{2}A_t^{-1} \nabla \hat L_t(\hat \theta_t) \nonumber\\
    &= \hat \theta_t - \frac{1}{2}A_t^{-1} \left[\nabla \hat L_{t-1}(\hat \theta_t) + g_t\right] \nonumber \\
    &= \hat \theta_t + \frac{1}{2} A_t^{-1} g_t^{-y_t} - \frac{1}{2}A_t^{-1} \left[\nabla \hat L_{t-1}(\hat \theta_t) + g_t + g_t^{-y_t}\right] 
\end{align}
Similarly, we have
\begin{align}
\label{eq:rel_theta_2}
\theta_t &= \htheta_t - \frac{1}{2} A_{t-1}^{-1} \nabla \hat L_{t-1}(\htheta_t) \nonumber \\
&= \htheta_t + \frac{1}{2} A_{t-1}^{-1} (g_t + g_t^{-y_t}) - \frac{1}{2} A_{t-1}^{-1} \left[\nabla \hat L_{t-1}(\hat \theta_t) + g_t + g_t^{-y_t} \right] .
\end{align}
Reorganizing the terms in the two previous equations leads to
\begin{align*}
g_t &= 2 \left[ A_t \htheta_t - A_t \theta_{t+1} - A_{t-1} \htheta_t + A_{t-1} \theta_t \right].
\end{align*}
Substituting in the instant regret~\eqref{eq:instant_regret1}, this entails
\begin{align}
    \hat \ell_t(\htheta_t) & + \hat L_{t-1}(\theta_t) - \hat L_{t}(\theta_{t+1}) = 2 \htheta_t^\top A_t \htheta_t - 2 \theta_{t+1}^\top A_t \htheta_t  - 2 \htheta_t^\top A_{t-1} \htheta_t + 2 \theta_t^\top A_{t-1} \htheta_t \nonumber  \\
    &\quad\quad - \htheta_t^\top A_t \htheta_t +  \htheta_t^\top A_{t-1} \htheta_t + \theta_{t+1}^\top A_t \theta_{t+1} - \theta_t^\top A_{t-1} \theta_t \nonumber \\
    &= \theta_{t+1}^\top A_t \theta_{t+1}- 2 \theta_{t+1}^\top A_t \htheta_t + \htheta_t^\top A_t \htheta_t - \theta_t^\top A_{t-1} \theta_t + 2 \theta_t^\top A_{t-1} \htheta_t - \htheta_t^\top A_{t-1} \htheta_t \nonumber \\
    &= (\theta_{t+1} - \htheta_t)^\top A_t (\theta_{t+1} - \htheta_t) - (\theta_t - \htheta_t)^\top A_{t-1} (\theta_t - \htheta_t)
    \label{eq:instant_regret2}
\end{align}
Rewriting equations \eqref{eq:rel_theta_1} and \eqref{eq:rel_theta_2}, we have
\begin{align*}
    2 A_t (\theta_{t+1} - \htheta_t) &= g_t^{-y_t} - \delta_t \\
    2 A_{t-1} (\theta_t - \htheta_t) &= g_t^{-y_t} + g_t - \delta_t
\end{align*}
with $\delta_t = \nabla \hat L_{t-1}(\hat \theta_t) + g_t + g_t^{-y_t}$. \\
Subtracting the first equation to the second, we can write the instant regret as a variance term and an optimization error term,
\begin{equation}
\label{eq:variance_opt_decomposition}
    \hat \ell_t(\htheta_t) + \hat L_{t-1}(\theta_t) - \hat L_{t}(\theta_{t+1})
    = Z_t + \Omega_t
\end{equation}
where
\begin{align*}
    Z_t &= \frac{1}{4} g_t^{-y_t \top} A_t^{-1} g_t^{-y_t} - \frac{1}{4} (g_t + g_t^{-y_t}) A_{t-1}^{-1} (g_t + g_t^{-y_t}) 
\end{align*}
and
\begin{align*}
    \Omega_t
    &= \frac{1}{4} \left[-2 g_t^{-y_t} A_t^{-1} \delta_t + \delta_t A_t^{-1} \delta _t + 2 (g_t + g_t^{-y_t}) A_{t-1}^{-1} \delta_t - \delta_t A_{t-1}^{-1} \delta_t \right] \,.
\end{align*}

\subsection{Upper-bound of the variance term $Z_t$}

Let us focus on bounding the term $Z_t$. Developing the terms and using the fact that $A_{t-1} \le A_t$, we have
\begin{align*}
    Z_t
    &= \frac{1}{4} (g_t + g_t^{-y_t}-g_t)^\top A_t^{-1} (g_t + g_t^{-y_t} - g_t) 
    - \frac{1}{4} (g_t + g_t^{-y_t})^\top A_{t-1}^{-1} (g_t + g_t^{-y_t}) \\
    &= \frac{1}{4} g_t^\top A_t^{-1} g_t - \frac{1}{2} g_t^{\top} A_t^{-1} (g_t + g_t^{-y_t}) + \frac{1}{4} (g_t+g_t^{-y_t})^\top A_t^{-1} (g_t+g_t^{-y_t}) \\
    &\quad - \frac{1}{4} (g_t+g_t^{-y_t})^\top A_{t-1}^{-1} (g_t+g_t^{-y_t}) \\
    &\le \frac{1}{4} g_t^\top A_t^{-1} g_t - \frac{1}{2} g_t^{\top} A_t^{-1} (g_t + g_t^{-y_t}) \\
    &= -\frac{1}{4} g_t^\top A_t^{-1} g_t - \frac{1}{2} g_t^{\top} A_t^{-1} g_t^{-y_t} \\
    &\le - \frac{1}{2} g_t^{\top} A_t^{-1} g_t^{-y_t} \,.
\end{align*}
Using the definition of the logistic function, we can relate $g_t$ and $g_t^{-y_t}$,
\begin{equation}
    g_t^{-y_t} = \frac{y_t x_t}{1 + e^{-y_t \htheta_t^\top x_t}} = e^{y_t \htheta_t^\top x_t}\frac{y_t x_t}{1 + e^{y_t \htheta_t^\top x_t}} =  - (1+BR) \eta_t g_t \,,
    \label{eq:gtequality}
\end{equation}
which implies,
\begin{align*}
    Z_t
    &\le (1+BR) \frac{\eta_t}{2} g_t^\top A_t^{-1} g_t \,.
\end{align*}
Summing over $t=1,\dots,n$, the sum telescopes thanks to Lemma \ref{thm:telescop_sum}, we obtain
\begin{equation}
\sum_{t=1}^n Z_t \le (1+BR) \sum\limits_{k=1}^d \log \left( 1 + \frac{\lambda_k(C_n)}{\lambda} \right) \,,\label{eq:presquefini}
\end{equation}
where $C_n = \frac{1}{2} \sum_{t=1}^n \eta_t g_t g_t^\top$ and $\lambda_k(C_n)$ is the $k$ largest eigenvalue of $C_n$. 

Now to upper-bound the right-hand side we need to upper-bound the trace of $C_n$, which we do now.
Recalling that $g_t = - y_t x_t/\big(1+\exp(y_t \htheta_t^\top x_t)\big)$, we have 
\[
\frac{\eta_t}{2} g_t g_t^\top = \frac{1}{2(1+BR)} \frac{e^{y \htheta_t^\top x_t}}{(1 + e^{y_t \htheta_t^\top x_t})^2} x_t x_t^\top 
\le \frac{1}{8(1 + BR)} x_t x_t^\top\,,
\]
where for the inequality, we used that $x/(1+ x)^2\leq 1/4$ for $x\geq 0$. Therefore, $\operatorname{Tr}(C_n) = \sum_{k=1}^d \lambda_k(C_n) \leq nR^2/(8(1 + BR))$ for all $k\geq 1$. Now remark that the right-hand side of equation \ref{eq:presquefini} is maximized under the constraint $\sum_{k=1}^d \lambda_k(C_n) \leq nR^2/(8(1 + BR))$ when all the eigenvalues are equals \ie $\lambda_k(C_n) = nR^2/(8d(1 + BR))$ for all $1 \le k \le d$ which leads to
\begin{align}
    \sum_{t=1}^n Z_t &\le d(1+BR)\log \left( 1 + \frac{nR^2}{8d(1+BR)\lambda} \right)
\end{align}

\subsection{Upper-bound on the optimization error $\Omega_t$}

It remains to bound the the approximation term $\Omega_t$.
\begin{align*}
    \Omega_t
    &= \frac{1}{4} [2 (g_t + g_t^{-y_t}) A_{t-1}^{-1} \delta_t -2 g_t^{-y_t} A_t^{-1} \delta_t + \underbrace{\delta_t A_t^{-1} \delta _t - \delta_t A_{t-1}^{-1}\delta_t}_{\le 0} ] \\
    &\le \frac{1}{2\lambda}\left[ 2\|g_t^{-y_t}\| + \|g_t\| \right] \|\delta_t\| \\
    &\le \frac{3R}{2\lambda} \|\delta_t\| \,.
\end{align*}
The last inequality is due to $\|g_t^y\| \le R$ for all $\|x_t\| \le R$ and $y \in \{-1,1\}$.
By definition of $\bar \theta_t \in \argmin{\theta \in \R^d} \left\{ \hat L_{t-1}(\theta) + \ell_t(\theta) + \ell_t^{-y_t}(\theta) \right\}$, we have $\nabla \hat L_{t-1}(\bar \theta_{t}) + \nabla \ell_t(\bar \theta_t) + \nabla \ell_t^{-y_t}(\bar \theta_t) = 0$.
Note also that $g_t^y = \nabla \ell_t^{y}(\htheta_t)$ for all $y \in \{-1,1\}$.
So $\delta_t$ may be rewriten as
\[ \delta_t = \nabla \hat L_{t-1}(\htheta_{t}) + \nabla \ell_t(\htheta_t) + \nabla \ell_t^{-y_t}(\htheta_t) - \nabla \hat L_{t-1}(\bar \theta_{t}) - \nabla \ell_t(\bar \theta_t) - \nabla \ell_t^{-y_t}(\bar \theta_t)\]
Using that $\nabla \ell_t$ and $\nabla \hat \ell_t$ are $R^2/4$-Lipschitz (for $\nabla \hat \ell_t$ remark that $\|\nabla^2 \hat \ell_t(\theta)\| = \| \eta_t g_t g_t^\top \| \le R^2/4)$), we have
\begin{align}
    \|\delta_t\| &\le \left[\frac{(t+1)R^2}{4} + 2\lambda B \right] \|\hat \theta_t - \htheta_t\| \,.
\end{align}
Summing over $t$ leads to
\begin{equation}
\label{eq:bound_sum_omega}
    \sum_{t=1}^n \Omega_t
    \le 3nR \left( \frac{nR^2}{8\lambda} + B \right) \epsilon \,.
\end{equation}
\subsection{Conculsion of the proof}
Using inequalities \eqref{eq:R_bound_Rhat}, \eqref{eq:quadratic_regret} and \eqref{eq:variance_opt_decomposition}, we have
\begin{equation*}
    R_n(\theta) \le \hat R_n(\theta) \le \lambda \|\theta\|^2 + \sum_{t=1}^n Z_t + \sum_{t=1}^n \Omega_t
\end{equation*}
Finally, inequalities~\eqref{eq:presquefini} and \eqref{eq:bound_sum_omega} concludes the proof.
\end{proof}

\section{Lemmas}
\label{app:lemmas}

\begin{proof}\textbf{of Lemma \ref{lem:quad_approx_lower}.}
Let $C >0$. First, note that for all $x \in \R$, $f'(x) = -(1+\exp(x))^{-1}$. 
To prove Lemma~\ref{lem:quad_approx_lower}, we need to show that for $\alpha = (1+C)^{-1}$, we have for all $a \in [-C,C]$ and $b \in \R$
\[ 
\log(1 + e^{-a}) \ge \log(1 + e^{-b}) - \frac{1}{1 + e^{b}} (a - b) + \frac{\alpha}{2} \frac{e^{b}}{(1 + e^{b})^2} (a - b)^2 \,.
\]
To do so, we fix $b \in \R$ and we define the function $\xi$ as
\[
\xi(a) = \log(1 + e^{-a}) - \log(1 + e^{-b}) + \frac{1}{1 + e^{b}} (a - b) - \frac{\alpha}{2} \frac{e^{b}}{(1 + e^{b})^2} (a - b)^2, \qquad -C\leq a\leq C
\]
It remains to show that $\xi$ is non-negative on $[-C,C]$.
Because $\xi(b) = 0$, it suffices to prove
\begin{equation}
\xi'(a) \begin{cases} 
            \le 0 & \text{for}\  a\leq b \\
            \geq 0 & \text{for}\  a \geq b
        \end{cases}
    \label{eq:xiprimesign}
\end{equation}
First, after some computation, differentiating $\xi$ leads to
\[ 
\xi'(a) = -\frac{1}{1 + e^a} + \frac{1}{1 + e^{b}} - \frac{\alpha e^{b}}{(1 + e^{b})^2} (a - b) \,,
\]
which can also be rewritten as
\[
    \xi'(a) = \frac{e^a - e^{b}}{(1 + e^a)(1 + e^{b})} - \alpha \frac{e^{b}}{(1 + e^{b})^2}(a-b) \,.
\]
Reorganizing the terms gives the following equation
\[
    (1+e^a)(1 + e^{b})e^{-b} \xi'(a) = e^{a - b} - 1 - \alpha \frac{1 + e^a}{1 + e^{b}}(a - b) \,.
\]
Therefore, \eqref{eq:xiprimesign} holds true as soon as 
\[ 
\alpha \le \frac{e^{a - b}-1}{a-b} \frac{1 + e^{b}}{1 + e^a} \,,
\]
with the convention $(e^0-1)/0 = 1$. The latter is satisfied by Lemma \ref{thm:ineq_indpt_b}, because $\alpha = (1+C)^{-1} \leq (1+|a|)^{-1}$ for all $a \in [-C,C]$.  
\end{proof}

\begin{lemma}
\label{thm:ineq_indpt_b}
For all $a, b \in \R$, 
\[
    \frac{1}{1+|a|} \leq \frac{e^{a - b}-1}{a-b} \frac{1 + e^{b}}{1 + e^a}  \,.
\]
\end{lemma}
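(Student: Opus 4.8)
The plan is to reduce to the case $a \ge 0$ and then minimise the right-hand side over $b$. First note that the inequality is invariant under $(a,b)\mapsto(-a,-b)$: since $\frac{1+e^{-b}}{1+e^{-a}}=e^{a-b}\,\frac{1+e^{b}}{1+e^{a}}$ and $\frac{e^{b-a}-1}{b-a}\,e^{a-b}=\frac{e^{a-b}-1}{a-b}$, the right-hand side is unchanged while $|a|$ is. So assume $a\ge 0$ and define
\[
\Phi(b)=\frac{e^{a-b}-1}{a-b}\cdot\frac{1+e^{b}}{1+e^{a}},
\]
with the convention $\Phi(a)=1$ (its value by continuity); the goal becomes $\Phi(b)\ge\frac{1}{1+a}$ for every $b\in\R$. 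The map $s\mapsto\frac{e^{s}-1}{s}=\sum_{k\ge 0}\frac{s^{k}}{(k+1)!}$ is entire and positive, so $\Phi$ is smooth and strictly positive; moreover $\Phi(b)\to+\infty$ as $b\to\pm\infty$ (inspect the two factors), hence $\Phi$ attains a global minimum at some $b^{*}\in\R$.

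To locate $b^{*}$, write $\phi(c)=\frac{e^{c}}{e^{c}-1}-\frac1c$ (with $\phi(0)=\tfrac12$). A direct computation gives $(\log\Phi)'(b)=-\phi(a-b)+\frac{1}{1+e^{-b}}$. Since $\frac{e^{c}-1}{c}=\int_{0}^{1}e^{tc}\,dt$ is the moment generating function of the uniform law on $[0,1]$, its logarithm is convex, so $\phi=\big(\log\frac{e^{c}-1}{c}\big)'$ is nondecreasing; therefore $(\log\Phi)''(b)=\phi'(a-b)+\frac{e^{-b}}{(1+e^{-b})^{2}}>0$, that is, $\log\Phi$ is strictly convex and $(\log\Phi)'$ has a unique zero, which must be $b^{*}$. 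Because $a\ge 0$ we have $\phi(a)\ge\phi(0)=\tfrac12$ and $\frac{1}{1+e^{-a}}\ge\tfrac12$, hence $(\log\Phi)'(0)=\tfrac12-\phi(a)\le 0\le\frac{1}{1+e^{-a}}-\tfrac12=(\log\Phi)'(a)$; by strict monotonicity this forces $b^{*}\in[0,a]$, so $c^{*}:=a-b^{*}\in[0,a]$.

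Finally, substitute the stationarity relation $\phi(c^{*})=\frac{1}{1+e^{-b^{*}}}$, equivalently $e^{a-c^{*}}=\frac{\phi(c^{*})}{1-\phi(c^{*})}$, into $\Phi(b^{*})$. Then $1+e^{b^{*}}=\frac{1}{1-\phi(c^{*})}$ and $1+e^{a}=\frac{1+\phi(c^{*})(e^{c^{*}}-1)}{1-\phi(c^{*})}$, and using $\phi(c)(e^{c}-1)=e^{c}-\frac{e^{c}-1}{c}$ the expression collapses to
\[
\Phi(b^{*})=\frac{e^{c^{*}}-1}{(c^{*}+1)+(c^{*}-1)e^{c^{*}}}.
\]
If $c^{*}=0$ then $\Phi(b^{*})=1\ge\frac{1}{1+a}$ and we are done, so assume $c^{*}>0$; then the denominator is positive, since $\frac{d}{dc}\big[(c+1)+(c-1)e^{c}\big]=1+ce^{c}>0$ on $[0,\infty)$ and the bracket vanishes at $0$. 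Clearing denominators, $\Phi(b^{*})\ge\frac{1}{1+c^{*}}$ is equivalent to $(1+c^{*})(e^{c^{*}}-1)\ge(c^{*}+1)+(c^{*}-1)e^{c^{*}}$, that is, to $2(e^{c^{*}}-1-c^{*})\ge 0$, which holds. Since $0\le c^{*}\le a$ we get $\Phi(b^{*})\ge\frac{1}{1+c^{*}}\ge\frac{1}{1+a}$, and minimality of $b^{*}$ completes the proof. The main obstacle is the algebraic collapse of $\Phi(b^{*})$ at the stationary point together with the bookkeeping of the sign conditions (positivity of the denominator and the localisation $c^{*}\in[0,a]$); the remaining steps are routine calculus.
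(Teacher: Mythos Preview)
Your proof is correct. It shares with the paper's argument the two structural moves---the symmetry reduction to $a\ge 0$ and the localisation of the minimiser $b^*$ to $[0,a]$---but the technical execution is genuinely different. The paper establishes convexity of $h(a,\cdot)$ by rewriting $h$ as a nonnegative combination of $(e^{a-b}-1)/(a-b)$ and $(e^{b-a}-1)/(b-a)$, and separately proves the uniform lower bound $h(a,b)\ge 1/(1+|a-b|)$ via a short case split; the conclusion then follows from $|a-b^*|\le a$. You instead prove strict convexity of $\log\Phi$ through the observation that $(e^c-1)/c$ is the moment generating function of the uniform law on $[0,1]$ (hence its logarithm is convex and $\phi$ is monotone), and in place of the uniform bound you carry out an explicit substitution of the stationarity condition to collapse $\Phi(b^*)$ to the closed form $(e^{c^*}-1)/[(c^*+1)+(c^*-1)e^{c^*}]$, from which $\Phi(b^*)\ge 1/(1+c^*)$ reduces to $e^{c^*}\ge 1+c^*$. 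Your route is a bit more algebraic at the end but delivers the exact value of the minimum; the paper's route avoids that computation at the cost of the preliminary case-split inequality. Both convexity arguments and both endgames are clean; neither dominates the other.
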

\begin{proof}
Define the function $h:\R^2 \mapsto \R$ that corresponds to the right-hand side of the inequality
\[
    h(a,b) = \frac{e^{a - b}-1}{a-b} \frac{1 + e^{b}}{1 + e^a},  \qquad (a,b) \in \R^2\,.
\]
It is worth pointing out that even $h$ is normally not defined for $a = b$, setting $h(a,a) = 1$ makes it well defined and infinitely differentiable on $\R^2$. 

\medskip\noindent
Let $a \ge b$. Then  $(1 + e^{b})/(1 + e^a) \ge e^{b-a}$, which implies
\[
    h(a,b) \ge \frac{(e^{a- b}-1)e^{b-a}}{(a - b)} 
    \ge \frac{e^{b-a}-1}{b-a} 
    \geq \frac{1}{1 + a - b} \,,
\]
where the last inequality is because  $(e^x - 1)/x \ge (1-x)^{-1}$ for all $x \le 0$. 

\medskip\noindent
Otherwise, let $a \le b$.  Then $(1 + e^{b})/(1 + e^a) \ge 1$, which entails
\[
    h(a,b) \ge \frac{e^{a- b}-1}{(a - b)} \ge \frac{1}{1 + b - a}
\]

\medskip\noindent
Combining the two cases $a\leq b$ and $a\geq b$ together, we get
\begin{equation}
    h(a,b) \ge \frac{1}{1 + |a - b|} \,. \label{eq:naive_ineq_h}
\end{equation}

\medskip\noindent
Now we show that $\argmin{b \in \R} h(a,b)$ contains a value between $0$ and $a$, i.e., in  $[0,a]$ if $a \ge 0$ or in $[a,0]$ otherwise. Rewriting the function $h$ as follows,
\begin{align}
\label{eq:h_sym}
    h(a,b) &= \frac{1}{1+e^a}\frac{e^{a-b} - 1}{a-b} + \frac{1}{1+e^{-a}}\frac{e^{b-a} - 1}{b-a}\,,
\end{align}
it is clear that $h(a,b) = h(-a,-b)$. We can therefore suppose without loss of generality that $a$ is non-negative. Indeed, if $b^*(a) \in \argmin{b} h(a,b)$ then $-b^*(a) \in \argmin{b} h(-a,b)$.
A further look at Equation \eqref{eq:h_sym} shows also that $h$ is convex in its second argument by convexity of the function $x \mapsto (e^x - 1)/x$ and stability of convex functions by composition with affine transformations and non-negative weighted sum.

To finish the proof, we will show that the derivative of $\partial h(a,b)/\partial b$ is non-negative for $b\to a$ and non-positive for $b = 0$. Convexity  of $h$ in its second argument will then conclude. Using $a \ge 0$, some computations (omitted here) lead to
\begin{align*}
    \frac{\partial}{\partial b}h(a,b) &= \frac{(b-a + 1)e^{a-b} - 1 + (b-a-1)e^b + e^a}{(a-b)^2(1+e^a)} \,.
\end{align*}
Develloping the first terms of the exponential series gives
\[ 
(b-a + 1)e^{a-b} - 1 + (b-a-1)e^b + e^a = \frac{1}{2}(e^a-1)(a-b)^2 + o_{b\to a}((a-b)^2)\,.
\]
Therefore, 
\[
    \lim_{b \rightarrow a} \frac{\partial}{\partial b}h(a,b) =\frac{1}{2}\frac{e^a-1}{e^a+1} \  \ge \ 0 \,.
\]
We note also that if $a=0$ then $\frac{\partial}{\partial b}h(0,0)=0$.
Now, if $a > 0$, using $(2-x) e^x \le 2+x$ for all $x \ge 0$, we have
\[
    \frac{\partial}{\partial b}h(a,0) =
    (2 - a)e^a - (a+2) 
    \le 0
\]
By convexity of the function $b \mapsto h(a,b)$, we conclude that $\argmin{b \in \R} h(a,b) \in [0,a]$. Combined with Inequality~\eqref{eq:naive_ineq_h} concludes the proof of the lemma.
\end{proof}

The following Lemma is a standard result of online matrix theory (Lemma~11.11 of \cite{cesa2006prediction}).
\begin{lemma}
    \label{lem:linearalgebra}
    Let $V \in \R^{d \times d}$ be an invertible matrix, $u \in \R^d$ and $U = V - u u^\top$. Then,
    \[
        u^\top V^{-1} u = 1 - \frac{\det(U)}{\det(V)} \,.
    \]
\end{lemma}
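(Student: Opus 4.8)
The plan is to recognize the identity as a rearrangement of the matrix determinant lemma, which I would derive cleanly via a $2\times 2$ block‑determinant computation. Introduce the bordered matrix
\[
    M = \begin{pmatrix} V & u \\ u^\top & 1 \end{pmatrix} \in \R^{(d+1)\times(d+1)}
\]
and evaluate $\det M$ in two ways using the Schur complement formula $\det\!\begin{pmatrix} A & B \\ C & D\end{pmatrix} = \det(D)\det(A - BD^{-1}C) = \det(A)\det(D - CA^{-1}B)$, valid whenever the block being inverted is invertible. Taking the Schur complement of the bottom‑right scalar block $D=1$ gives $\det M = \det(V - u\,u^\top) = \det(U)$. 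Taking the Schur complement of the top‑left block $V$ (invertible by hypothesis) gives $\det M = \det(V)\,(1 - u^\top V^{-1}u)$, the last factor being a scalar.

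Equating the two expressions yields $\det(U) = \det(V)\,(1 - u^\top V^{-1}u)$; dividing by $\det(V)\neq 0$ and rearranging gives $u^\top V^{-1}u = 1 - \det(U)/\det(V)$, which is exactly the claim. The block‑determinant formula itself, if one wishes to include its proof, is the standard consequence of the block $LU$ (Schur) factorization of $M$. Alternatively one can avoid block matrices entirely: write $U = V\,(I - V^{-1}u\,u^\top)$, so that $\det(U) = \det(V)\det(I - w\,u^\top)$ with $w := V^{-1}u$, and then evaluate $\det(I - w\,u^\top) = 1 - u^\top w$ either by Sylvester's determinant identity $\det(I_d - w u^\top) = \det(I_1 - u^\top w)$ or by observing that $I - w u^\top$ fixes the hyperplane $\{z : u^\top z = 0\}$ pointwise and scales $w$ by the factor $1 - u^\top w$, so that (in the generic case $u^\top w\neq 0$) its eigenvalues are $1$ with multiplicity $d-1$ and $1 - u^\top w$.

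I do not expect any genuine obstacle here, since the statement is elementary linear algebra and each route is only a few lines. The single point requiring a sliver of care is to ensure that every inversion invoked along the way is legitimate: in the block‑matrix argument this is immediate because both $D=1$ and $V$ are invertible, and in the eigenvalue variant one should note that $\det(I - w u^\top) = 1 - u^\top w$ still holds in the degenerate case $u^\top w = 0$ (then $w u^\top$ is nilpotent, hence $I - w u^\top$ is unipotent with determinant $1$), a subtlety that Sylvester's identity sidesteps entirely.
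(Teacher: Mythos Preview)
Your proposal is correct: both the Schur-complement/block-determinant route and the Sylvester/eigenvalue route are valid and cleanly establish the identity, and you have handled the only delicate point (legitimacy of the inverses, and the degenerate case $u^\top V^{-1}u = 0$) explicitly.

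As for comparison with the paper: there is nothing to compare against, because the paper does not prove this lemma at all. It simply states it as ``a standard result of online matrix theory'' and cites Lemma~11.11 of Cesa-Bianchi and Lugosi. Your write-up therefore supplies a self-contained argument that the paper omits; either of your two variants would be a fine replacement for the bare citation.
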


\begin{lemma}
\label{thm:telescop_sum}
If $C_n = \sum\limits_{t=1}^n \frac{\eta_t}{2} g_t g_t^\top$ and $A_n = C_n + \lambda I$ then
\[ \sum\limits_{t=1}^n \frac{\eta_t}{2} g_t^\top A_t^{-1} g_t \le \sum\limits_{k=1}^d \log \left(1 + \frac{\lambda_k(C_n)}{\lambda} \right)\]
where $\lambda_k(C_n)$ is the $k$ largest eigenvalue of $C_n$.
\end{lemma}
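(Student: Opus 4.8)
The plan is to prove Lemma~\ref{thm:telescop_sum} by the classical potential/telescoping argument for log-determinants. The quantity $\frac{\eta_t}{2} g_t^\top A_t^{-1} g_t$ should be identified as a per-step increment that telescopes once we recognize $A_t = A_{t-1} + \frac{\eta_t}{2} g_t g_t^\top$. Concretely, I would set $V = A_t$ and $u = \sqrt{\eta_t/2}\, g_t$ so that $A_{t-1} = A_t - u u^\top$, and apply Lemma~\ref{lem:linearalgebra} to get
\[
    \frac{\eta_t}{2} g_t^\top A_t^{-1} g_t = u^\top A_t^{-1} u = 1 - \frac{\det(A_{t-1})}{\det(A_t)} \,.
\]

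Next I would use the elementary inequality $1 - x \le \log(1/x) = -\log x$ valid for $x > 0$ (here $x = \det(A_{t-1})/\det(A_t) \in (0,1]$ since $A_{t-1} \preceq A_t$ and both are positive definite because $\lambda > 0$), which yields
\[
    \frac{\eta_t}{2} g_t^\top A_t^{-1} g_t \le \log\det(A_t) - \log\det(A_{t-1}) \,.
\]
Summing over $t = 1,\dots,n$ telescopes to $\log\det(A_n) - \log\det(A_0) = \log\det(A_n) - d\log\lambda = \log\det(A_n/\lambda) = \log\det(I + C_n/\lambda)$, using $A_0 = \lambda I$ and $A_n = C_n + \lambda I$. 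Finally, diagonalizing the symmetric positive semidefinite matrix $C_n$ gives $\log\det(I + C_n/\lambda) = \sum_{k=1}^d \log(1 + \lambda_k(C_n)/\lambda)$, which is exactly the claimed bound.

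There is no serious obstacle here — this is a textbook argument (it is essentially Lemma~11.11 combined with the standard log-determinant potential from \cite{cesa2006prediction}). The only points requiring minor care are: verifying that all the matrices $A_t$ are invertible (which holds because $\lambda I \preceq A_t$, so $A_t$ is positive definite), checking that $\det(A_{t-1})/\det(A_t) \le 1$ so that the logarithmic inequality applies in the right direction (this follows from $A_{t-1} \preceq A_t$), and handling the degenerate case $g_t = 0$ where the increment is zero and the inequality is trivial. The step I expect to be the "main" one in the sense of being the crux is the application of Lemma~\ref{lem:linearalgebra} to convert the quadratic form into a determinant ratio; everything after that is routine.
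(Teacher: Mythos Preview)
Your proposal is correct and follows essentially the same approach as the paper's proof: apply Lemma~\ref{lem:linearalgebra} with $V=A_t$ and $u=\sqrt{\eta_t/2}\,g_t$ to get the determinant ratio, use $1-x\le -\log x$, telescope, and rewrite $\log\det(I+C_n/\lambda)$ via eigenvalues. The paper's version is slightly terser (it writes the ratio as $\det(A_{t-1}/\lambda)/\det(A_t/\lambda)$, which is of course the same thing), but there is no substantive difference.
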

\begin{proof}
Remarking that $A_t = A_{t-1} + \frac{\eta_t}{2} g_t g_t^\top$ and applying lemma \ref{lem:linearalgebra} we have
\[ \frac{\eta_t}{2} g_t^\top A_t^{-1} g_t = 1 - \frac{\operatorname{det}(A_{t-1}/\lambda)}{\operatorname{det}(A_t/\lambda)} \]
We use now that $1-u \le \log(1/u)$ for $u>0$ which yields
\[ \frac{\eta_t}{2} g_t^\top A_t^{-1} g_t \le\log \frac{\operatorname{det}(A_{t}/\lambda)}{\operatorname{det}(A_{t-1}/\lambda)} \]
Summing over $t = 1,...,n$, using $A_0 = \lambda I$ and $A_n = C_n + \lambda I$ with $C_n = \sum\limits_{t=1}^n \frac{\eta_t}{2} g_t g_t^\top$, we get
\begin{align*}
\sum\limits_{t=1}^n \frac{\eta_t}{2} g_t^\top A_t^{-1} g_t &\le \log \left( \operatorname{det} \left(I + \frac{C_n}{\lambda} \right)\right) \\
&= \sum\limits_{k=1}^d \log \left(1 + \frac{\lambda_k(C_n)}{\lambda} \right)
\end{align*}

\end{proof}

\section{Efficient implementation of \namealgo}
\label{app:implementation}

\begin{algorithm}
\caption{\namealgo detailed version. Here $\operatorname{eigen-dec}$ corresponds to economic-eigendecomposition of a symmetric matrix and $\operatorname{chol-update}$ to the rank 1 Cholesky update \cite{golub2012matrix}\label{algorithm-detail}}
    $L_0 = \lambda^{-1/2} I, \tilde b_0 = 0, \tilde \theta_0 = 0$\\
    \For{$t = 1,...,n$}{
    receives $x_t$ \\
    $W_t = L_t^{-1}(b_{t-1}, x_t),  ~~\{U_t, \Sigma_t\} = \operatorname{eigen-dec}(W_t^\top W_t), u_t = \Sigma_t^{1/2} U_t^\top e_1, ~~ v_t = \Sigma_t^{1/2} U_t^\top e_2$\\
    $\omega^0_t = 0$\\
    \For{$i = 1,...,T$}{
    $\omega^i_t = \omega^{i-1}_t - \frac{\lambda}{4\lambda+R^2} \left[2\omega^{i-1}_t -2 u_t - (1+e^{v_t^\top \omega^{i-1}_t})^{-1} v_t + (1+e^{-v_t^\top \omega^{i-1}_t})^{-1} v_t\right]$
    }
    $\tilde \theta_t = L_t^{-\top} (W_t (U_t \Sigma_t^{-1/2} \omega^T_t))$\\
    predict $\hat y_t = \tilde \theta_t^{\top} x_t$ \\
    receives $y_t$ \\
    $g_t = -(1 + e^{y_t \tilde\theta_t^\top x_t})^{-1} y_t x_t, \quad \eta_t = \frac{e^{y_t \hat\theta_t^\top x_t}}{1+BR}$\\
    $L_t = \operatorname{chol-update}(L_{t-1}, \sqrt{\eta_t/2} g_t), \quad b_t = b_{t-1} + (\eta_t g_{t}^\top \tilde\theta_{t} - 1)g_{t}$\\
    }
\end{algorithm}

\begin{proof}{\bf of Lemma~\ref{lm:dec-htheta-alg}}
Given the definition of $\htheta_t$, in \eqref{eq:def_estimator}, using the notation in Appendix~\ref{app:notation} and Eq.~\eqref{eq:L_hat_quadratic}
\begin{align*}
    \htheta_t &=~~ \argmin{\theta \in \R^d} ~\widehat{L}_{t-1}(\theta) + \log(1 + e^{-\theta_t^\top x_t}) + \log(1 + e^{\theta^\top x_t}) \\
    & = ~~\argmin{\theta \in \R^d} ~\theta^\top A_{t-1} \theta -2 \theta^\top b_{t-1}  + \log(1 + \ch(\theta^\top x_t) ).
\end{align*}
Since $A_{t-1}$ is invertible by construction and $A_{t-1} = L_{t-1} L_{t-1}^\top$, where $L_{t-1}$ is lower triangular and the unique Cholesky decomposition of $A_{t-1}$ \cite{golub2012matrix}, we can define the following equivalent problem, by the substitution $r = L_{t-1}^\top \theta$
$$ r_t ~~=~~ \argmin{r \in \R^d} ~r^\top r - 2 r^\top \tilde u_t + \log(1 + \ch(r^\top \tilde v_t)) , \quad \tilde u_t = L_{t-1}^{-1} b_{t-1}, \quad \tilde v_t = L_{t-1}^{-1} x_t,$$
and in particular $r_t = L_t^\top \htheta_t$. Now note that  any $r \in \R^d$ can be always written as $r = W_t q + \mu$ with $W_t = (\tilde u_t, \tilde v_t) \in \R^{d \times 2}$ for some $q \in \R^2$ and $\mu \in (\operatorname{span} W_t)^\bot$, then $r_t = W_t q_t + \mu_t$ for $q_t, \mu_t$ defined as 
\begin{align*}
(q_t, \mu_t) &= \argmin{q \in \R^2, \mu \in (\operatorname{span} W_t)^\bot} \|W_t q+ \mu\|^2 - 2\tilde u_t^\top(W_t q + \mu)  + \log(1 + \ch(\tilde v_t^\top (W_t q + \mu)))\\
&= \argmin{q \in \R^2, \mu \in (\operatorname{span} W_t)^\bot} \|W_t q\|^2 + \|\mu\|^2 - 2\tilde u_t^\top W_t q  + \log(1 + \ch(\tilde v_t^\top W_t q)),
\end{align*}
where in the last inequality we use the fact that $\tilde u_t = W_t e_1$, $\tilde v_t = W_t e_2$ and $W_t^\top \mu = 0$, by construction. Now the solution of the problem above is given by $\mu_t = 0$ and $q_t$ as
$$ q_t ~~=~~  \argmin{q \in \R^2} ~\|W_t q\|^2 - 2\tilde u_t^\top W_t q  + \log(1 + \ch(\tilde v_t^\top W_t q)).$$
Now that in the problem above $q$ is always applied to $W_t$, so in the case that $W_t$ is not full rank then all the solutions of the form $q_t = q^0_t +  \zeta$ with $\zeta \in (\operatorname{span} W_t^\top W_t)^\bot$ are admissible and leading to the same $r_t$. Then we can restrict the problem above as
$$ q_t ~~=~~  \argmin{q \in \operatorname{span} W_t^\top W_t} ~\|W_t q\|^2 - 2\tilde u_t^\top W_t q  + \log(1 + \ch(\tilde v_t^\top W_t q)).$$
To conclude, take the economic eigenvalue decomposition of $W_t^\top W_t$, i.e., $W_t^\top W_t = U_t \Sigma_t U_t^\top$ with $U_t \in \R^{2\times p_t}$ with $p_t$ the rank of $W_t^\top W_t$, such that $U_t^\top U_t = I$ and $\Sigma_t \in \R^{p_t \times p_t}$ is diagonal and positive \cite{golub2012matrix}. Now we consider the substitution $\omega = \Sigma_t^{1/2} U_t^\top q$, whose inverse is $q = U_t \Sigma_t^{-1/2} \omega$ since $q\in \operatorname{span} W_t^\top W_t$ and $U_tU_t^\top$ is the projection matrix whose span is exactly $\operatorname{span} W_t^\top W_t$, i. e. $U_t  U_t^\top q = q$ for any $q\in \operatorname{span} W_t^\top W_t$, which leads to the equivalent problem 
$$\omega_t ~~=~~  \argmin{\omega \in \R^{p_t}} ~\omega^\top\omega - 2 u_t^\top \omega  + \log(1 + \ch(v_t^\top \omega)),$$ 
where 
\begin{align*}
u_t &= \Sigma_t^{-1/2} U_t^\top W_t^\top \tilde u_t = \Sigma_t^{-1/2} U_t^\top W_t^\top W_t e_1 = \Sigma_t^{1/2} U_t^\top e_1,\\
v_t &= \Sigma_t^{-1/2} U_t^\top W_t^\top \tilde v_t = \Sigma_t^{-1/2} U_t^\top W_t^\top W_t e_2 = \Sigma_t^{1/2} U_t^\top e_2.
\end{align*}
Note that in particular $\omega_t = \Sigma_t^{1/2} U_t q_t$ and $q_t = U_t \Sigma_t^{-1/2} \omega_t$. Then 
$$\htheta_t = L_{t-1}^{-\top} r_t = L_{t-1}^{-\top} W_t q_t = L_{t-1}^{-\top} W_t U_t \Sigma_t^{-1/2}\omega_t.$$
\end{proof}

\begin{proof}{\bf of Lemma~\ref{lm:solve-small-prob}}
Since $\Omega_t$ is smooth and strongly convex, we can apply standard results on gradient descent (see for example Theorem 3.10 of \cite{bubeck2015convex}), obtaining
$$\|\omega^T_t -\omega_t\| \leq e^{-T/(2\kappa_t)}\|\omega^0_t - \omega_t\|,$$
when gradient descent is used with step-size $\gamma = 1/\beta_t$ and where $\kappa_t = \beta_t / \alpha_t$ with $\alpha_t$ a lower bound of the strong convexity constant of $\Omega_t$ and $\beta_t$ an upper bound the Lipschitz constant of $\nabla \Omega_t$. Note indeed that if $\Omega_t$ is $\alpha$-strongly convex for some $\alpha$, it will be also $\alpha'$-strongly convex, for any $0 < \alpha' \leq \alpha$; moreover if $\nabla \Omega_t$ is $\beta$-Lipschitz for some $\beta$, it will be also $\beta'$-Lipschitz, for any $\beta' \geq \beta$; for more details see Chapter 3.4 of \cite{bubeck2015convex}. Now, by construction $\alpha_t = 1$, indeed $\Omega_t(\omega) - \|\omega\|^2$ is still a convex problem. Moreover, for any $\omega, \omega' \in \R^2$, by the mean value theorem applied to the function $g: [0,1] \to \R^2$ defined as $g(r) = \nabla \Omega(\omega + r (\omega' - \omega))$, there exists a $q \in \R^2$ such that
$$ \nabla \Omega_t(\omega) - \nabla \Omega_t(\omega') = \nabla^2 \Omega_t(q) (\omega' - \omega).$$
This implies that $\|\nabla \Omega_t(\omega) - \nabla \Omega_t(\omega')\| \leq \sup_q \|\nabla^2 \Omega_t(q)\| \|\omega' - \omega\|$ so the Lipschitz constant of $\nabla \Omega_t$ is upper bounded by $\beta_t = \sup_q \|\nabla^2 \Omega_t(q)\|$.
The Hessian of $\Omega_t$ is defined as
$$\nabla^2 \Omega_t(\omega) = 2I +  \frac{1}{1+\cosh(v_t^\top \omega)} v_t v_t^\top,$$
then 
$$\sup_q \|\nabla^2 \Omega_t(q)\| \leq 2 + \|v_t\|^2 \sup_w \frac{1}{1+\cosh(v_t^\top w)} \leq  2 + \frac{\|v_t\|^2}{2}.$$
To conclude, note that $v_t$ in Thm.~\ref{thm:dec-htheta-alg} is defined as $v_t =  (W_t^\top W_t)^{1/2} e_2$ with $e_2 = (0,1)$, $W_t = L_{t-1}^{-1} (b_{t-1}, x_t)$, $L_{t-1}$ the lower triangular Cholesky decomposition of $A_{t-1}$ (i.e. $A_{t-1} = L_{t-1} L_{t-1}^\top$) and $A_{t-1}, b_{t-1}$ defined in Eq.~\eqref{eq:def-At-bt}. Then
$$\|v_t\|^2 = v_t^\top v_t = e_2 U_t \Sigma_t U_t^\top e_2^\top = e_2 W_t^\top W_t e_2 = x_t^\top L_{t-1}^{-\top} L_{t-1}^{-1} x_t  = x_t^\top A_{t-1}^{-1} x_t.$$
So $\|v_t\|^2 \leq \|x_t\|^2 \|A_{t-1}\|^{-1} \leq R^2/\lambda$, since $\|x_t\| \leq R$ by assumption and $A_{t-1} \succeq \lambda I$ by construction. Finally $\beta_t = 2 + \|v_t\|^2/2 \leq 2 + R^2/(2\lambda)$ and $\alpha_t = 1$, then $\kappa_t \leq 2 + R^2/(2\lambda)$ and $\gamma_t = 1/(2 + R^2/(2\lambda))$. We have
$$\|\omega^T_t - \omega_t\| \leq \exp(-T/(2\kappa_t) + \log \|) \leq \exp(-T/(4 + R^2/\lambda) + \log \|\omega^0_t - \omega_t\|).$$
To quantify $\|\omega^0_t - \omega_t\|$ we need a bound for $\|\omega_t\|$. Note that, since $\Omega_t$ is smooth and convex, $\omega_t$ is characterized by $\nabla \Omega_t(\omega_t) = 0$, i.e. $2\omega_t - 2u_t - (1 + e^{v_t^\top \omega_t})^{-1} v_t + (1 + e^{-v_t^\top \omega_t})^{-1} v_t = 0$, from which
$$\|\omega_t\| \leq \|u_t\| + \sup_q \left|(1 + e^{-v_t^\top q})^{-1} - (1 + e^{v_t^\top q})^{-1} \right| \|v_t\|/2 \leq \|u_t\| + \|v_t\|/2.$$
Analogously to the case of $v_t$, by definition of $u_t$, we have $\|u_t\|^2 = e_1 W_t^\top W_t e_1 = b_{t-1} A_{t-1}^{-1} b_{t-1},$ then
$\|u_t\|^2 \leq \|b_{t-1}\|^2 \|A_{t-1}^{-1}\| \leq \|b_{t-1}\|^2/\lambda$.
Now we need a bound for $b_{t-1}$. Note that for any $s \in \{1,\dots, t-1\}$, we have $g_s = \nabla \ell_s(\htheta_s) = -(1 + e^{y_s \htheta_s^\top x_s})^{-1} y_s x_s$, moreover $\eta_s = e^{y_s \htheta_s^\top x_s}/(1+BR)$ and
$$(\eta_s g_s^\top \htheta_s - 1) g_s  = \frac{y_s x_s^\top \htheta_s}{2 + 2 \cosh(y_s x_s^\top \htheta_s)} (1+BR)^{-1} y_s x_s + \frac{1}{1+e^{y_s x_s^\top \htheta_s}} y_s x_s.$$
Since $\sup_z |z/(2 + 2 \cosh(z))| \leq 1$, $y_s \in \{-1,1\}$ and $\|x_s\| \leq R$ by assumption, we have $\|(\eta_s g_s^\top \htheta_s - 1) g_s\| \leq ((1+BR)^{-1} + 1) R  \leq 2R$, then 
\begin{equation}\label{eq:bound-b-t-1}
\|b_{t-1}\| = \frac{1}{2}\sum_{s=1}^{t-1}\|(\eta_s g_s^\top \htheta_s - 1) g_s \| \leq (t-1) R.
\end{equation}
To conclude, $\|u_t\| \leq R\lambda^{-1/2}(t-1)$, $\|\omega_t\| \leq \|u_t\| + \|v_t\|/2 \leq R\lambda^{-1/2}t$. By choosing $\omega^0_t = 0$, then $P_t = \|\omega_t\| \leq R\lambda^{-1/2}t$ and so $\|\omega^T_t - \omega_t\| \leq \epsilon$, when choosing $T \geq (4+R^2/\lambda)\log(R\lambda^{-1/2}t/\epsilon)$.
\end{proof}

\begin{proof}{\bf of Theorem~\ref{thm:comp-compl}}
We first analyze the cost of one iteration of Algorithm~\ref{algorithm-short} (which is detailed in Algorithm~\ref{algorithm-detail} presented above). Note that at each step $t$, the cost of the gradient descent algorithm performed to compute $\omega^T_t$ is the number of iterations $T$, since we are solving a $p_t$-dimensional problem, with $p_t \in  \{1,2\}$. The two most expensive operation performed at step $t$ (excluding gradient descent) are the solution of triangular linear systems of dimensions $d \times d$ when computing $L_{t-1}^{-1} v$ or $L_{t-1}^{-\top} v$ for some vector $v \in \R^d$, which costs $O(d^2)$ (this operation is performed 4 times). The other expensive operation is the rank 1 Cholesky update of $L_{t-1}$ with the vector $\sqrt{\eta_t/2}g_t$, which costs $O(d^2)$ \cite{golub2012matrix}, indeed the eigendecomposition is performed on the matrix $W_t^\top W_t$ which is $2 \times 2$.
By repeating such operation for $n$ steps, we obtain a total cost of 
$$O(n d^2 + n T).$$

The upper-bound on the regret is a direct consequence of Theorem \ref{thm:main_theorem_approx} and Lemma \ref{lm:solve-small-prob}, with $T$ chosen according to the lemma and $\epsilon = \frac{\sqrt{\lambda}}{3nR \left(\frac{nR^2}{8\lambda} + B \right)}$, since $\|L_{t-1}^{-1}\|^2 = \|A_{t-1}^{-1}\| \leq \lambda^{-1}$ and $W_t U_t \Sigma_t^{-1/2}$ is a partial isometry, we have
\begin{equation} \|\hat \theta_t - \tilde \theta_t\| \leq \|L_{t-1}^{-1}\| \|W_t U_t \Sigma_t^{-1/2}\| \|\omega_t - \omega_t^T\| \leq \lambda^{-1/2} \epsilon \leq \frac{1}{3nR \left(\frac{nR^2}{8\lambda} + B \right)} ,
\end{equation}
that plugged in the result of Theorem \ref{thm:main_theorem_approx} gives the desired result.

\end{proof}

\end{document}